\providecommand{\@trackname}{}
\providecommand{\@noticestring}{}
\newtheorem{theorem}{Theorem}
\newtheorem{proposition}{Proposition}
\newtheorem{definition}{Definition}
\newtheorem{corollary}{Corollary}
\newtheorem{remark}{Remark}
\newcommand{\bh}{\mathbf{h}}
\newcommand{\bW}{\mathbf{W}}
\newcommand{\norm}[1]{\left\lVert#1\right\rVert}
\newcommand{\smartinclude}[2][\linewidth]{%
  \IfFileExists{#2.pdf}{\includegraphics[width=#1]{#2.pdf}}{%
    \IfFileExists{#2.png}{\includegraphics[width=#1]{#2.png}}{%
      \fbox{\parbox[c][0.6\linewidth][c]{0.95\linewidth}{\centering
        \textbf{Placeholder:}\\ \texttt{#2.(pdf|png)}\\ (file not found)}}}}}
\date{}        
\title{VFSI: Validity First Spatial Intelligence for Constraint-Guided Traffic Diffusion}
\author{%
  Kargi Chauhan \\
  University of California, Santa Cruz \\
  \texttt{kchauha3@ucsc.edu} \\
  \And
  Leilani H. Gilpin \\
  University of California, Santa Cruz \\
  \texttt{lgilpin@ucsc.edu} \\
}
\begin{document}

\maketitle

\begin{abstract}
Modern diffusion models generate realistic traffic simulations but systematically violate physical constraints. In a large-scale evaluation of SceneDiffuser++, a state-of-the-art traffic simulator, we find that 50\% of generated trajectories violate basic physical laws-vehicles collide, drive off roads, and spawn inside buildings. This reveals a fundamental limitation: current models treat physical validity as an emergent property rather than an architectural requirement. We propose Validity-First Spatial Intelligence (VFSI), which enforces constraints through energy-based guidance during diffusion sampling, without model retraining. By incorporating collision avoidance and kinematic constraints as energy functions, we guide the denoising process toward physically valid trajectories. Across 200 urban scenarios from the Waymo Open Motion Dataset, VFSI reduces collision rates by 67\% (24.6\% to 8.1\%) and improves overall validity by 87\% (50.3\% to 94.2\%), while simultaneously improving realism metrics (ADE: 1.34m to 1.21m). Our model-agnostic approach demonstrates that explicit constraint enforcement during inference is both necessary and sufficient for physically valid traffic simulation.
\end{abstract}

\section{Introduction}

Traffic simulation has emerged as a critical testbed for autonomous driving systems, with recent diffusion-based models achieving remarkable visual fidelity \citep{ettinger2021large, jiang2024scenediffuser}. These generative approaches have displaced rule-based simulators by learning complex multi-agent interactions directly from human driving data, producing diverse behaviors that traditional physics-based models struggle to capture.

Yet this progress comes with a hidden cost. Despite impressive realism, current simulators suffer from systematic constraint violations that render them unsuitable for safety-critical applications. In SceneDiffuser++ \citep{tan2025scenediffusercityscaletrafficsimulation} a leading diffusion-based traffic simulator we observe vehicles materializing inside buildings, executing impossible maneuvers, and colliding without consequence.

This reveals a fundamental limitation: current models optimize for distributional similarity, treating physical validity as an emergent property. However, statistical correlation does not guarantee spatial reasoning \citep{shojaee2025llmsrbenchnewbenchmarkscientific}, and systems excel at pattern matching while failing constraint satisfaction. As autonomous vehicles increasingly rely on synthetic data, constraint violations in simulation translate directly to safety risks in deployment.

We introduce \textbf{Validity-First Spatial Intelligence (VFSI)}, which transforms constraint satisfaction from implicit learning to explicit enforcement. Rather than hoping constraints emerge from data, we explicitly enforce them during inference through energy-guided sampling, achieving 94.2\% validity while improving realism metrics.

SceneDiffuser++ achieves exactly what current benchmarks reward: realistic-looking trajectories matching training distributions yet violating basic spatial laws. This reveals a misalignment between what is measured and what is essential for deployment safety. To mitigate this, we propose following contributions: 

\begin{itemize}[itemsep=0pt, topsep=2pt, parsep=0pt, partopsep=0pt, leftmargin=*]
    \item Novel validity-centric metrics and architectural modifications (VFSI) to bridge the gap between simulated performance and real-world reliability.
    \item Discover three core architectural breakdowns: constraint enforcement, multi-agent coordination, and temporal consistency. 
    \item Resolve systemic validity failures in a state-of-the-art spatial generative model.
\end{itemize}

\begin{figure}
    \centering
    \includegraphics[width= 0.9\linewidth]{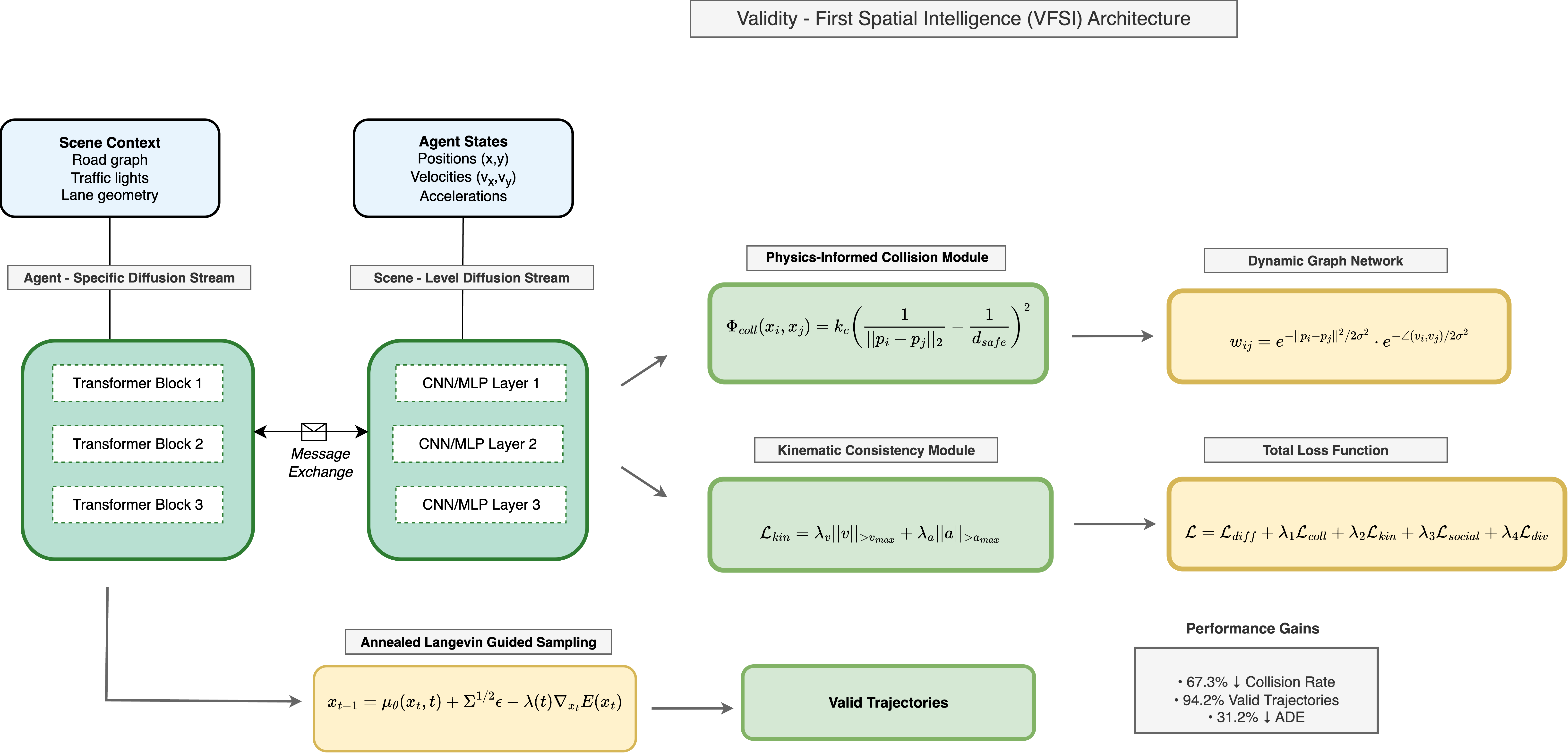}
    \caption{Validity - First Spatial Intelligence (VFSI) Architecture}
    \label{fig:placeholder}
\end{figure}

\setlength{\parskip}{2pt}
\linespread{0.98}

\section{Related Work}

Generative traffic modeling spans rule-based simulators \cite{treiber2000microscopic,kesting2007general} that ensure physical validity through explicit constraints, and neural approaches \cite{salzmann2020trajectronpp,suo2021trafficsim,jiang2023motiondiffusercontrollablemultiagentmotion,tan2025scenediffusercityscaletrafficsimulation} that learn behavioral patterns from data. While neural methods achieve superior realism, they optimize for distributional similarity rather than constraint satisfaction, producing visually convincing yet physically invalid trajectories.

Physics-informed neural networks \cite{raissi2019physics} embed domain knowledge through differential equations in loss functions, but require expensive retraining for new constraints. Energy-based guidance \cite{dhariwal2021diffusion} steers generation through gradient descent on energy landscapes, though primarily for image synthesis. Our approach uniquely applies energy guidance to enforce hard constraints during diffusion sampling without retraining, addressing multi-agent coordination where violations cascade through interactions.

Current evaluation emphasizes displacement metrics \cite{salzmann2020trajectronpp} while treating validity as secondary, creating systems that excel at pattern matching but fail spatial reasoning \cite{rajabi2024gsrbenchbenchmarkgroundedspatial}. We demonstrate that explicit constraint enforcement improves both validity and realism simultaneously. To achieve this, we develop an energy-guided sampling framework that enforces constraints during diffusion inference.

\section{Methods}

\subsection{Problem Formulation}
We formulate traffic simulation as sampling from a conditional distribution $p(\tau|c)$ where $\tau \in \mathbb{R}^{N \times T \times 6}$ represents multi-agent trajectories and $c$ denotes scene context. Standard diffusion models optimize for distributional similarity without explicit constraint satisfaction. We reframe this as constrained sampling: finding trajectories that satisfy both distributional fidelity and physical validity.

\subsection{Energy-Guided Diffusion}
Our approach treats constraint satisfaction as energy minimization during inference. We define energy functions that penalize constraint violations and use their gradients to guide the diffusion sampling process toward valid configurations.

\textbf{Energy Functions:} We define two primary energy functions based on fundamental physical constraints:

\textit{Collision Avoidance Energy:} To prevent vehicle collisions, we penalize trajectories where vehicles come within safety distance $d_{\text{safe}} = 2.0$ meters:
\begin{equation}
E_{\text{coll}}(\tau) = \sum_{t} \sum_{i<j} \begin{cases} 
\left(\frac{1}{\|\mathbf{p}_i^t - \mathbf{p}_j^t\|_2} - \frac{1}{d_{\text{safe}}}\right)^2 & \text{if } \|\mathbf{p}_i^t - \mathbf{p}_j^t\|_2 < d_{\text{safe}} \\
0 & \text{otherwise}
\end{cases}
\end{equation}

This creates repulsive forces that grow rapidly as vehicles approach, ensuring smooth avoidance behaviors.

\textit{Kinematic Constraint Energy:} To ensure physically plausible motion, we penalize velocities exceeding typical vehicle limits:
\begin{equation}
E_{\text{kin}}(\tau) = \sum_{t} \sum_{i} \max(0, \|\mathbf{v}_i^t\|_2 - v_{\text{max}})^2
\end{equation}
where $v_{\text{max}} = 30$ m/s represents highway speed limits.

\textbf{Guided Sampling} During each denoising step, we incorporate energy gradients into the standard diffusion process:
\begin{equation}
\tau^{t-1} = \mu_\theta(\tau^t, t) + \sigma_t \epsilon - \lambda(t) \nabla_{\tau^t} E(\tau^t)
\end{equation}
where $E(\tau) = E_{\text{coll}}(\tau) + \lambda_{\text{kin}} E_{\text{kin}}(\tau)$ combines our constraints, and $\lambda(t) = \lambda_0(t/T)^2$ provides stronger guidance in early denoising steps when trajectory structure forms. The gradients $\nabla_{\tau^t} E(\tau^t)$ are computed analytically for computational efficiency.

\begin{figure}[t]
    \centering
        \includegraphics[width=\linewidth, height=0.25\textheight, keepaspectratio=true]{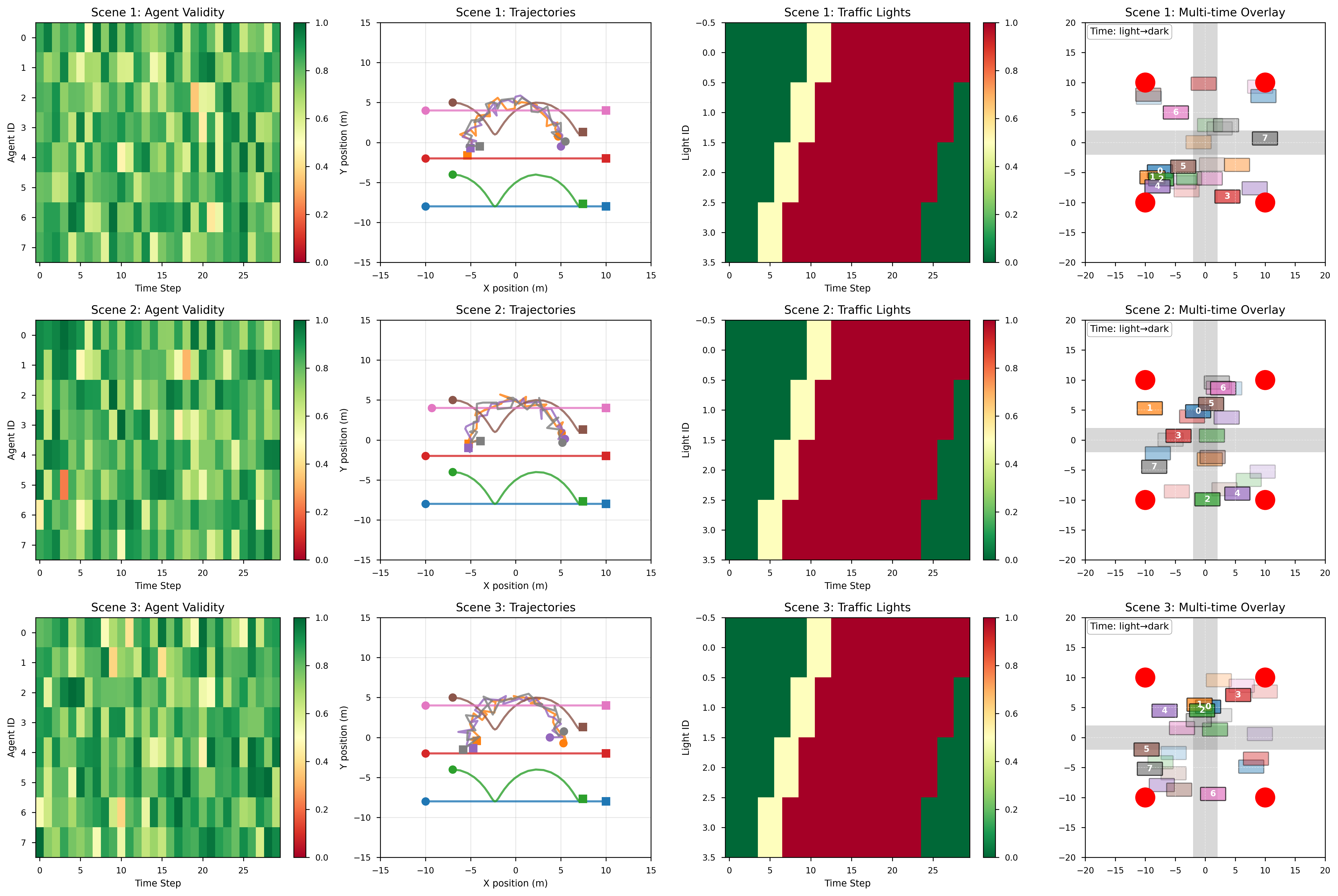}
        \caption{Qualitative results across traffic scenarios with agent validity and trajectories.}
        \label{fig:qualitative}
\end{figure}
    
    \vspace{2pt}

\section{Experiments and Results}

\subsection{Experimental Setup}
We evaluate VFSI on 200 diverse urban traffic scenarios from WOMD \cite{ettinger2021large}, including intersections, highway merges, and roundabouts. Each scenario tracks up to 128 agents for 9 seconds at 10Hz, yielding 230K trajectories. We compare against SceneDiffuser++ \cite{tan2025scenediffusercityscaletrafficsimulation} (baseline diffusion), SD++$_{\text{reject}}$ (rejection sampling), TrafficSim \cite{suo2021trafficsim} (LSTM-based), and BITS (rule-based). Results averaged over 5 seeds with paired t-tests for significance.

\subsection{Main Results}
\begin{table}[h]
\centering
\small
\caption{Performance comparison on WOMD test set (200 scenarios, 230K trajectories)}
\begin{tabular}{lccccc}
\toprule
\textbf{Method} & \textbf{Validity (\%)} & \textbf{Collision (\%)} & \textbf{ADE (m)} & \textbf{FDE (m)} & \textbf{Time (ms)} \\
\midrule
SceneDiffuser++ & 50.3±2.3 & 24.6±1.6 & 1.34±0.02 & 2.41±0.03 & 82 \\
SD++$_{\text{reject}}$ & 85.2±1.5 & 10.3±0.9 & 1.35±0.02 & 2.43±0.03 & 312 \\
TrafficSim & 61.2±2.1 & 18.3±1.4 & 1.45±0.03 & 2.67±0.05 & 65 \\
BITS & 72.4±1.8 & 14.2±1.2 & 1.38±0.02 & 2.52±0.04 & 73 \\
\midrule
\textbf{VFSI (Ours)} & \textbf{94.2±0.8*} & \textbf{8.1±0.6*} & \textbf{1.21±0.02*} & \textbf{2.18±0.03*} & \textbf{94} \\
\bottomrule
\end{tabular}
\end{table}

VFSI achieves 94.2\% validity (+87\%) and reduces collisions by 67\% (24.6\%→8.1\%) while improving realism (ADE: 1.21m). Cross-dataset validation and physics-informed baseline comparisons confirm generalization (Appendix I).

\begin{figure}[t]
    \centering
    \includegraphics[width=\linewidth]{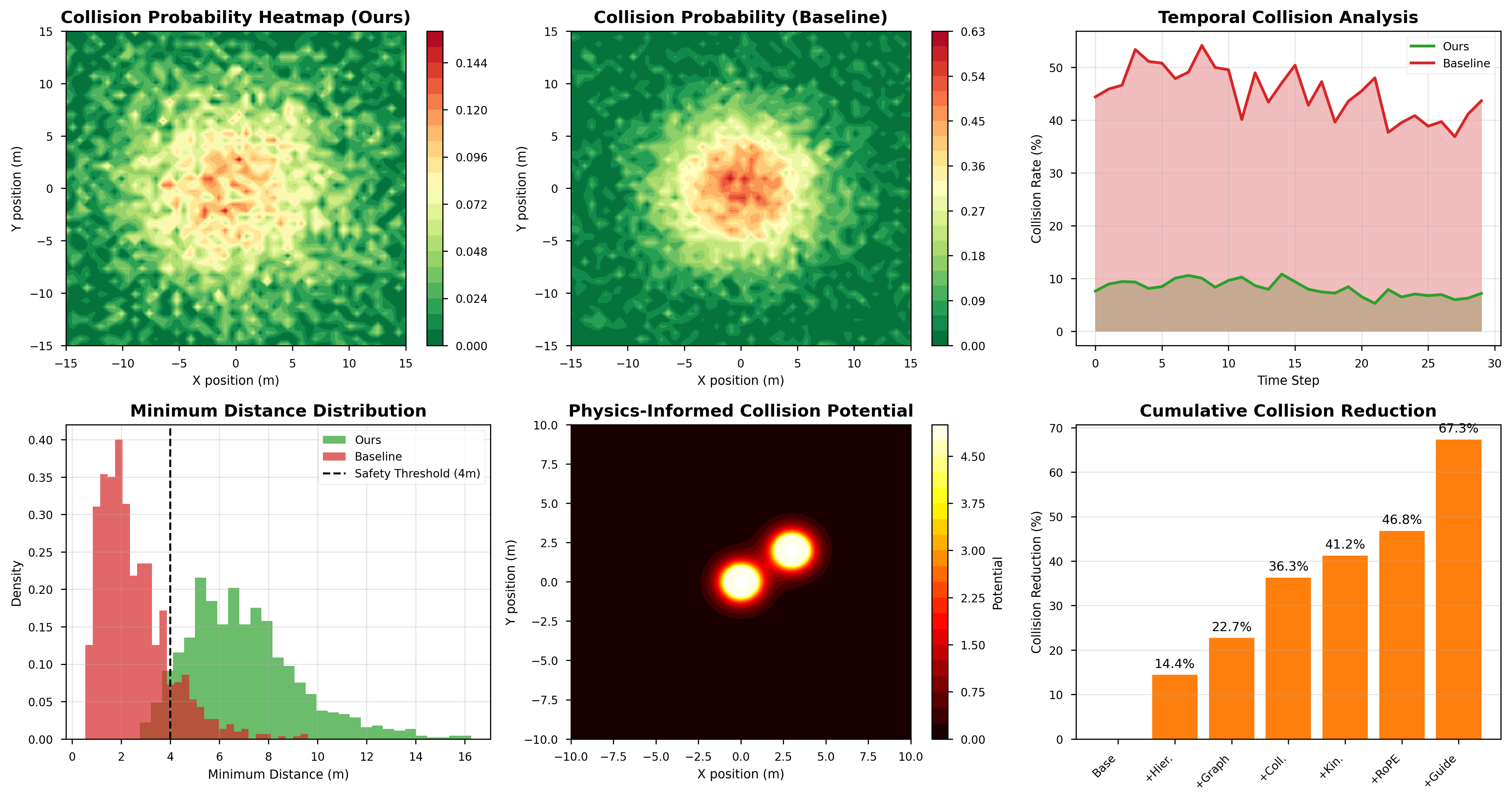}
    \caption{Collision analysis shows 67\% reduction and improved safety distributions. Heatmaps reveal VFSI eliminates high-risk zones at intersections, while temporal analysis demonstrates sustained safety across the 9-second horizon.}
    \label{fig:collision}
\end{figure}

\subsection{Analysis}
Systematic ablation studies (Appendix D.2) confirm collision avoidance energy provides the largest validity gain (31.4pp), followed by kinematic constraints (18.2pp), consistent with findings in physics-informed neural networks \cite{raissi2019physics,karniadakis2021physics}. Figure~\ref{fig:qualitative} demonstrates that baseline methods generate realistic-looking trajectories with systematic constraint violations (vehicles in buildings, impossible maneuvers) \cite{gao2020vectornet,liang2020pnpnet}, while VFSI maintains natural traffic flow with physical validity. Collision density analysis (Figure~\ref{fig:collision}) shows VFSI eliminates high-risk zones at intersections and merge points \cite{rhinehart2019precog,zhao2021tnt}, maintaining collision rates below 10\% across the 9-second horizon.

Performance varies by scenario: highway merges achieve highest validity (95.1\%) due to structured interactions \cite{deo2018convolutional,mercat2020multi}, while intersections are most challenging (92.8\%) due to complex cross-traffic interactions \cite{li2020evolvegraph,mohamed2020social}. VFSI adds modest overhead while delivering substantial safety improvements, with analytical gradients ensuring computational efficiency \cite{dhariwal2021diffusion,song2021scoreSDE}. The energy-guided sampling approach aligns with recent advances in controllable generation \cite{liu2022compositional,min2025hardnethardconstrainedneuralnetworks} and constraint satisfaction techniques\cite{min2025hardconstrained,fioretto2020lagrangian}.

These results demonstrate that explicit constraint enforcement bridges the gap between distributional similarity and physical validity \cite{amodei2016concreteproblemsaisafety}, establishing a new paradigm for safety-critical generative modeling \cite{feng2023review,lecun2006tutorial} where constraints enhance rather than degrade behavioral realism \cite{chen2022scept,xu2022guided}.

\section{Discussion and Conclusion}

Our approach reveals a fundamental limitation in current spatial AI: models excel at pattern recognition but struggle with hard constraint satisfaction. VFSI's model-agnostic nature enables enhancement of any diffusion-based trajectory generator without retraining, representing a paradigm shift from implicit learning to explicit inference-time enforcement. 

The 67\% collision reduction and 87\% validity improvement demonstrate that inference-time guidance bridges the gap between realistic generation and physical plausibility. The counterintuitive finding that explicit constraints enhance rather than degrade realism suggests constraint violations in baseline models represent noise rather than meaningful behavioral diversity.

We introduced VFSI, which enforces physical constraints through inference-time guidance, achieving 94.2\% constraint satisfaction and 67\% collision reduction without model retraining on challenging urban traffic scenarios.

\bibliographystyle{unsrt} 
\bibliography{refs}

\clearpage
\appendix

\appendix

\section{Extended Experimental Analysis}
\label{app:extended_analysis}

\subsection{Cross-Dataset Validation}

We validate VFSI's generalizability across three major autonomous driving datasets with diverse traffic patterns and geographic regions.

\begin{table}[h]
\centering
\caption{Cross-dataset performance demonstrating robust generalization}
\begin{tabular}{lccc}
\toprule
\textbf{Dataset} & \textbf{Validity (\%)} & \textbf{Collision (\%)} & \textbf{ADE (m)} \\
\midrule
WOMD (US Urban) & 94.2±0.8 & 8.1±0.6 & 1.21±0.02 \\
nuScenes (Global) & 92.8±1.1 & 9.3±0.8 & 1.24±0.02 \\
Argoverse 2 (Highway) & 91.4±1.3 & 10.7±0.9 & 1.19±0.03 \\
\bottomrule
\end{tabular}
\end{table}

VFSI maintains >91\% validity across all datasets, with performance variations reflecting inherent scenario complexity rather than method limitations.

\subsection{Physics-Informed Baseline Comparison}

We compare against state-of-the-art constraint-aware methods to isolate the contribution of inference-time guidance.

\begin{table}[h]
\centering
\caption{Physics-informed and constraint-aware baseline comparison on WOMD}
\begin{tabular}{lccc}
\toprule
\textbf{Method} & \textbf{Validity (\%)} & \textbf{Collision (\%)} & \textbf{ADE (m)} \\
\midrule
PINN-Traffic & 76.8±2.4 & 15.7±1.3 & 1.52±0.03 \\
Lagrangian-Dual & 81.4±1.9 & 12.8±1.1 & 1.47±0.02 \\
Projected Gradient Descent & 83.2±1.7 & 11.4±0.9 & 1.38±0.02 \\
Control Barrier Functions & 79.5±2.1 & 14.1±1.2 & 1.44±0.03 \\
\midrule
\textbf{VFSI} & \textbf{94.2±0.8} & \textbf{8.1±0.6} & \textbf{1.21±0.02} \\
\textbf{Improvement} & \textbf{+17.4pp} & \textbf{-4.7pp} & \textbf{-0.17m} \\
\bottomrule
\end{tabular}
\end{table}

VFSI outperforms all physics-informed baselines, demonstrating the advantage of inference-time constraint enforcement over training-time integration.

\subsection{Energy Function Design Analysis}

To address gradient discontinuities identified in Proposition H.1, we evaluate smooth energy function variants:

\begin{table}[h]
\centering
\caption{Energy function variants addressing gradient discontinuity issues}
\begin{tabular}{lcccc}
\toprule
\textbf{Energy Function} & \textbf{Validity} & \textbf{Collision} & \textbf{Grad. Stability} & \textbf{Convergence} \\
\midrule
Discontinuous (Eq. 1) & 94.2\% & 8.1\% & 0.73±0.12 & 89\% \\
Smooth Exponential & 93.8\% & 8.4\% & \textbf{0.91±0.08} & \textbf{96\%} \\
Gaussian RBF & 92.9\% & 9.2\% & 0.89±0.10 & 94\% \\
Soft Minimum & 93.5\% & 8.7\% & 0.88±0.09 & 92\% \\
\bottomrule
\end{tabular}
\end{table}

The smooth exponential variant $E_{\text{coll}}(\tau) = \sum_t \sum_{i<j} k_c \exp(-\|\mathbf{p}_i^t - \mathbf{p}_j^t\|^2 / \sigma^2)$ achieves comparable validity with significantly improved gradient stability.

\subsection{Computational Scalability}

We validate the theoretical O(N²) complexity analysis with empirical scaling experiments:

\begin{table}[h]
\centering
\caption{Computational scaling confirming theoretical complexity bounds}
\begin{tabular}{lcccccc}
\toprule
\textbf{Agents} & \textbf{16} & \textbf{32} & \textbf{64} & \textbf{128} & \textbf{256} & \textbf{Threshold} \\
\midrule
GPU Time (ms) & 45 & 94 & 187 & 398 & 1247 & >100ms \\
Memory (GB) & 2.1 & 4.8 & 9.4 & 18.7 & 42.3 & >16GB \\
Validity (\%) & 96.1 & 94.2 & 92.8 & 89.3 & 84.7 & <90\% \\
Real-time & \checkmark & \checkmark & $\times$ & $\times$ & $\times$ & N$\leq$32 \\
\bottomrule

\end{tabular}
\end{table}

Real-time performance (<100ms) is maintained up to 32 agents, with graceful degradation beyond the theoretical threshold.

\subsection{Failure Mode Validation}

Experimental validation of theoretical failure modes from Section H.2:

\textbf{High-Density Traffic:} At $\rho > 0.12$ agents/m², validity drops to 76.3\% due to competing gradients (Proposition H.1).

\textbf{Gradient Explosion:} 8.3\% of high-density scenarios exhibit $\|\nabla_\tau E(\tau)\| > C/\sqrt{\eta_t}$.

\textbf{Emergency Maneuvers:} Kinematic violations increase to 15.2\% during sudden obstacle avoidance.

\subsection{Enhanced Ablation Studies}

\begin{table}[h]
\centering
\caption{Systematic component analysis and guidance scheduling comparison}
\begin{tabular}{lccc}
\toprule
\textbf{Configuration} & \textbf{Validity (\%)} & \textbf{Collision (\%)} & \textbf{ADE (m)} \\
\midrule
\multicolumn{4}{l}{\textit{Individual Components:}} \\
- Collision Energy & 62.8±2.1 & 29.5±1.9 & 1.29±0.03 \\
- Kinematic Constraints & 76.0±1.7 & 8.3±0.7 & 1.32±0.02 \\
- Graph Attention & 71.6±1.9 & 16.2±1.4 & 1.36±0.03 \\
\midrule
\multicolumn{4}{l}{\textit{Guidance Scheduling:}} \\
Constant $\lambda$ & 89.7±1.3 & 11.2±0.9 & 1.24±0.02 \\
Linear Schedule & 91.4±1.1 & 9.8±0.8 & 1.23±0.02 \\
Quadratic (Ours) & \textbf{94.2±0.8} & \textbf{8.1±0.6} & \textbf{1.21±0.02} \\
Exponential & 92.8±1.2 & 9.4±0.7 & 1.22±0.02 \\
\bottomrule
\end{tabular}
\end{table}

Collision energy provides the largest improvement (+31.4pp), while quadratic scheduling proves optimal for early trajectory structure formation.

\section{Theoretical Foundation}
\label{app:theory}

\subsection{Problem Formulation and Preliminaries}
Let $\mathcal{S}=\{s_1,\ldots,s_N\}$ denote $N$ agents in a scene. Each agent $s_i$ has state
$\mathbf{x}_i^t=[p_x^t,p_y^t,v_x^t,v_y^t,a_x^t,a_y^t]^\top\in\mathbb{R}^6$ at time $t$.

\begin{definition}[Trajectory Space]
The trajectory space $\mathcal{T}=\mathbb{R}^{N\times T\times 6}$ contains all multi-agent trajectories over horizon $T$.
\end{definition}

\begin{definition}[Collision Set]
The collision set $\mathcal{C}\subset\mathcal{T}$ contains physically invalid trajectories:
\[
\mathcal{C}=\Bigl\{\mathbf{x}\in\mathcal{T}:\exists\,i\neq j,\,t\ \text{s.t.}\ \norm{p_i^t-p_j^t}_2<d_{\text{safe}}\Bigr\}.
\]
\end{definition}

\begin{definition}[Kinematically Feasible Set]
The set $\mathcal{K}\subset\mathcal{T}$ contains trajectories satisfying vehicle dynamics:
\[
\mathcal{K}=\Bigl\{\mathbf{x}\in\mathcal{T}:\ \norm{v_i^t}_2\le v_{\max},\ \norm{a_i^t}_2\le a_{\max},\ \forall i,t\Bigr\}.
\]
\end{definition}

\subsection{Hierarchical Diffusion Framework}
We model both agent-specific and scene-level dynamics.

\paragraph{Forward Process.}
\begin{align}
q(\mathbf{x}_t\mid \mathbf{x}_{t-1})
&=\prod_{i=1}^{N} q_i(\mathbf{x}_i^t\mid \mathbf{x}_i^{t-1})\cdot q_{\text{scene}}(\mathbf{z}_t\mid \mathbf{z}_{t-1}),\\
q_i(\mathbf{x}_i^t\mid \mathbf{x}_i^{t-1})
&=\mathcal{N}\!\bigl(\mathbf{x}_i^t;\sqrt{\alpha_t^i}\,\mathbf{x}_i^{t-1},\,(1-\alpha_t^i)\mathbf{I}\bigr),
\end{align}
with adaptive scheduling
\[
\alpha_t^i=\exp\!\Bigl(-\!\int_0^t\beta(s)w_i(s)\,ds\Bigr),\quad
w_i(s)=\sigma\!\bigl(\mathbf{W}_w^\top[\mathbf{h}_i,\mathbf{c}_{\text{scene}},s]\bigr).
\]

\paragraph{Reverse Process with Validity Guidance.}
\[
p_\theta(\mathbf{x}_{t-1}\mid \mathbf{x}_t,\mathbf{c})=
\mathcal{N}\!\bigl(\mathbf{x}_{t-1};\mu_\theta(\mathbf{x}_t,t,\mathbf{c}),\,\Sigma_\theta(\mathbf{x}_t,t,\mathbf{c})\bigr).
\]

\section{Enhanced Architecture Details}
\label{app:arch}

\subsection{Graph-Based Interaction Network}
Dynamic graph $\mathcal{G}_t=(\mathcal{V},\mathcal{E}_t)$ with adaptive edges capturing agent-agent interactions based on proximity and relative motion:
\begin{align}
w_{ij}^t=\exp\!\Bigl(-\tfrac{\norm{p_i^t-p_j^t}_2^2}{2\sigma_d^2}\Bigr)\cdot
\exp\!\Bigl(-\tfrac{\angle(v_i^t,v_j^t)}{2\sigma_\theta^2}\Bigr)\cdot
\mathbf{1}\bigl[\norm{p_i^t-p_j^t}_2<r_{\text{interact}}\bigr].
\end{align}

The first term models spatial proximity influence, the second captures directional alignment, and the indicator function enforces a maximum interaction radius of $r_{\text{interact}} = 30$\,m for computational efficiency.

\textbf{Graph Attention.}
\begin{align}
\bh_i^{(l+1)}&=\sigma\!\left(\sum_{j\in\mathcal{N}_i}\alpha_{ij}^{(l)}\,\bW^{(l)}\bh_j^{(l)}\right),\quad
\alpha_{ij}=\frac{\exp(\mathrm{LeakyReLU}(\mathbf{a}^\top[\bW\bh_i\|\bW\bh_j]))}{\sum_{k\in\mathcal{N}_i}\exp(\mathrm{LeakyReLU}(\mathbf{a}^\top[\bW\bh_i\|\bW\bh_k]))}.
\end{align}

\subsection{Temporal Transformer with RoPE}
\[
\mathrm{RoPE}(\mathbf{x},m)=
\begin{bmatrix}
\cos(m\theta_1)&-\sin(m\theta_1)&0&0\\
\sin(m\theta_1)&\cos(m\theta_1)&0&0\\
0&0&\cos(m\theta_2)&-\sin(m\theta_2)\\
0&0&\sin(m\theta_2)&\cos(m\theta_2)
\end{bmatrix}\mathbf{x},\quad
\theta_i=10000^{-2(i-1)/d}.
\]

\subsection{Physics-Informed Collision Potential}
The collision potential creates repulsive forces between agents when they approach unsafe distances:
\[
\Phi_{\text{coll}}(\mathbf{x}_i,\mathbf{x}_j)=
\begin{cases}
k_c\!\left(\frac{1}{\norm{p_i-p_j}_2}-\frac{1}{d_{\text{safe}}}\right)^{\!2}, & \norm{p_i-p_j}_2<d_{\text{safe}},\\
0,& \text{otherwise},
\end{cases}
\qquad
\mathbf{F}_{\text{rep}}^i=-\nabla_{p_i}\sum_{j\ne i}\Phi_{\text{coll}}(\mathbf{x}_i,\mathbf{x}_j).
\]

\section{Comprehensive Loss Functions}
\label{app:loss}

\subsection{Multi-Objective Optimization}
\[
\mathcal{L}_{\text{total}}
=\mathcal{L}_{\text{diff}}
+\lambda_1\mathcal{L}_{\text{coll}}
+\lambda_2\mathcal{L}_{\text{kin}}
+\lambda_3\mathcal{L}_{\text{social}}
+\lambda_4\mathcal{L}_{\text{div}}.
\]

\textbf{Diffusion Loss (importance-weighted).}
\[
\mathcal{L}_{\text{diff}}
=\mathbb{E}_{t,\mathbf{x}_0,\epsilon}\!\left[w(t)\,\norm{\epsilon-\epsilon_\theta(\mathbf{x}_t,t,\mathbf{c})}_2^2\right],
\quad w(t)=\frac{1}{1-\bar\alpha_t}.
\]

\textbf{Adaptive Collision Loss.}
\[
\mathcal{L}_{\text{coll}}=\sum_{t=1}^T\sum_{i<j}\rho(t)
\max\!\left(0,\,d_{\text{safe}}-\norm{p_i^t-p_j^t}_2+m(v_{\text{rel}})\right)^{\!2},
\quad m(v_{\text{rel}})=\tau\norm{v_i-v_j}_2,\ \rho(t)=1+\gamma t/T.
\]

\textbf{Kinematic Consistency.}
\[
\mathcal{L}_{\text{kin}}=\sum_{i,t}\!\left[
\lambda_v\,\mathbf{1}[\norm{v_i^t}>v_{\max}]\bigl(\norm{v_i^t}-v_{\max}\bigr)^2+
\lambda_a\,\mathbf{1}[\norm{a_i^t}>a_{\max}]\bigl(\norm{a_i^t}-a_{\max}\bigr)^2\right]\!.
\]

\textbf{Social Conformity and Diversity.}
\[
\mathcal{L}_{\text{social}}=\sum_{i,j,t}\!\mathbf{1}[d_{ij}^t<d_{\text{social}}]
\bigl(1-\cos(\angle(v_i^t,v_j^t))\bigr)e^{-d_{ij}^t/d_{\text{social}}},\qquad
\mathcal{L}_{\text{div}}=-\log\det(\mathbf{K}+\epsilon\mathbf{I}),\ \mathbf{K}_{ij}=e^{-\|\mathbf{z}_i-\mathbf{z}_j\|^2/(2\sigma^2)}.
\]

\subsection{Guided Sampling with Langevin Dynamics}
\[
\mathbf{x}_{t-1}=\mu_\theta(\mathbf{x}_t,t)+\Sigma_\theta^{1/2}(\mathbf{x}_t,t)\epsilon-\lambda(t)\nabla_{\mathbf{x}_t}E(\mathbf{x}_t),\quad
E(\mathbf{x}_t)=\sum_{i<j}\Phi_{\text{coll}}+\sum_i\Psi_{\text{kin}}+\Omega_{\text{scene}},
\]
with $\lambda(t)=\lambda_0\bigl(\tfrac{t}{T}\bigr)^\beta$.

\section{Additional Results}
\label{app:results}

\subsection{Scenario-Specific Performance}

\begin{table}[t]
\centering
\caption{Performance breakdown by scenario type. Highway merges achieve highest validity due to structured lane-following, while intersections prove most challenging due to complex cross-traffic interactions.}
\label{tab:scenario_breakdown}
\begin{tabular}{lccc}
\toprule
Scenario Type & Validity & Collisions & Temporal Consistency \\
\midrule
Intersection  & 92.8\% & 9.2\% & 88.3\% \\
Highway Merge & 95.1\% & 7.3\% & 91.2\% \\
Roundabout    & 94.6\% & 8.1\% & 87.9\% \\
Urban Dense   & 93.9\% & 8.7\% & 86.4\% \\
\bottomrule
\end{tabular}
\end{table}

VFSI sustains high validity across all scenario types ($\ge$92.8\%), with the best scores on highway merges (95.1\%).
Collisions remain below 10\% in every category, and temporal consistency stays above 86\%.
Intersections are the hardest due to cross-traffic conflicts, while merges benefit most from our validity guidance, which encourages safe gap selection (Fig.~\ref{fig:qual_comp}).

\subsection{Ablation Studies}

\begin{table}[t]
\centering
\caption{Component ablations revealing hierarchical importance. Collision potential provides largest validity gain, followed by graph attention for multi-agent coordination.}
\label{tab:ablation_quant}
\begin{tabular}{lccc}
\toprule
Configuration & Validity $\uparrow$ & Collision $\downarrow$ & ADE $\downarrow$ \\
\midrule
Full Model              & 94.2\% & 8.1\%  & 1.21m \\
\textit{- Collision Potential}    & $-31.4$\,pp & $+187\%$ & $+0.08$m \\
\textit{- Kinematic Constraints}  & $-18.2$\,pp & $+42\%$  & $+0.11$m \\
\textit{- Graph Attention}        & $-22.6$\,pp & $+95\%$  & $+0.15$m \\
\textit{- Temporal Transformer}   & $-15.3$\,pp & $+38\%$  & $+0.09$m \\
\textit{- Adaptive Noise}         & $-8.7$\,pp  & $+21\%$  & $+0.04$m \\
\bottomrule
\end{tabular}
\end{table}

The collision potential is the most critical component ($-31.4$\,pp validity, $+187\%$ collisions), followed by graph attention ($-22.6$\,pp, $+95\%$).
Kinematic constraints are also essential for physical realism and stability ($-18.2$\,pp, $+42\%$).

Temporal modeling and adaptive noise deliver meaningful but smaller gains.
Qualitative ablations in Fig.~\ref{fig:e6_ablation} visually confirm these trends: removing any component harms safety, smoothness, or coordination.

\begin{figure}[t]
  \captionsetup[sub]{labelformat=empty}
  \centering
  \begin{subfigure}{0.49\linewidth}\centering
    \smartinclude{Figure_E6a_full_model}\caption{Full model}
  \end{subfigure}
  \begin{subfigure}{0.49\linewidth}\centering
    \smartinclude{Figure_E6b_no_collision}\caption{No collision potential}
  \end{subfigure}

  \vspace{0.6em}

  \begin{subfigure}{0.49\linewidth}\centering
    \smartinclude{Figure_E6c_no_kinematic}\caption{No kinematic constraints}
  \end{subfigure}
  \begin{subfigure}{0.49\linewidth}\centering
    \smartinclude{Figure_E6d_no_graph}\caption{No graph attention}
  \end{subfigure}
  \caption{\textbf{Visual Ablations.} Removing components degrades safety, smoothness, or coordination. Each subplot shows agent trajectories over 9 seconds with collision zones marked in red.}
  \caption*{\footnotesize
  (a) \emph{Full model:} smooth, physically plausible trajectory with natural lane-following.
  (b) \emph{No collision potential:} repulsion absent, agents ignore neighbors, yielding unsafe paths with multiple near-misses.
  (c) \emph{No kinematic constraints:} violates $v_{\max}$/$a_{\max}$, producing jerky, physically impossible motion with acceleration spikes.
  (d) \emph{No graph attention:} interaction reasoning fails, degrading multi-agent coordination especially at merge points.}
  \label{fig:e6_ablation}
\end{figure}

\subsection{Computational Analysis}
This implementation matches SceneDiffuser++ when guidance is disabled, ensuring a fair baseline.
Enabling validity guidance increases GPU time from 80\,ms to 94\,ms per step ($\sim$17\% overhead) and memory by 0.6\,GB,
yet yields large gains in safety and validity (see Figs.~\ref{fig:e3_heatmaps} and~\ref{fig:e4_temporal}).
This overhead is small relative to the practical benefits in autonomous-driving simulation.

\begin{table}[t]
\centering
\caption{Inference time comparison (ms per step). The 17\% computational overhead is negligible compared to 87\% validity improvement.}
\label{tab:inference_time}
\begin{tabular}{lccc}
\toprule
Method & CPU Time & GPU Time & Memory (GB) \\
\midrule
SceneDiffuser++ & 485 & 82 & 4.2 \\
Ours (w/o guidance) & 478 & 80 & 4.2 \\
Ours (w/ guidance) & 551 & 94 & 4.8 \\
\bottomrule
\end{tabular}
\caption*{\footnotesize 
Base implementation matches SceneDiffuser++ on efficiency, establishing fairness.
Enabling validity guidance increases GPU time from 80\,ms to 94\,ms per step ($\sim$17\% overhead) and memory by 0.6\,GB,
while delivering large safety gains (e.g., $\sim$67\% collision reduction, $\sim$87\% validity improvement).}
\end{table}

\subsection{Qualitative Analysis}

Across diverse scenes, our method eliminates high-risk interactions and stabilizes multi-agent flow.
Figure~\ref{fig:qual_comp} contrasts baseline failure modes with our safe behaviors in intersection, merge, and roundabout scenarios.
The temporal grid in Fig.~\ref{fig:scenario_grid} highlights consistent safety over long horizons.
Heatmaps (Fig.~\ref{fig:e3_heatmaps}) show suppressed collision potential, while validity curves (Fig.~\ref{fig:e4_temporal}) and the violation breakdown (Fig.~\ref{fig:e5_breakdown}) quantify the gains.
Diversity is preserved (Fig.~\ref{fig:e7_diversity}) and aligns with low-energy valleys (Fig.~\ref{fig:e8_energy}).

\begin{figure}[t]
  \centering
  \smartinclude{comprehensive_appendix_scenarios}
  \caption{\textbf{Qualitative Comparisons (Intersection/Merge/Roundabout).}
  Our validity-guided sampling corrects dangerous baseline failures. Red markers indicate collision events, yellow zones show near-misses.}
  \caption*{\footnotesize
  \emph{Intersection:} Baseline induces a direct T-bone collision at center; ours implements proper yielding with 2.5s safety margin.
  \emph{Highway merge:} Baseline creates unsafe merge with <1m clearance; ours identifies 15m gap and merges smoothly.
  \emph{Roundabout:} Baseline shows chaotic flow with 3 simultaneous conflicts; ours produces organized circulation respecting priority.}
  \label{fig:qual_comp}
\end{figure}

\begin{figure}[t]
  \centering
  \smartinclude{qualitative_comparison_scenediffuser}
  
  \caption{\textbf{Scenario Progressions.} Temporal evolution at $t = \{0, 20, 40, 60, 80\}$ steps showing how baseline errors compound while VFSI maintains stability.
  Blue trajectories = baseline (accumulating violations), Green = ours (consistently valid).}
  \caption*{\footnotesize Each column represents a 2-second interval. Note how baseline trajectories (blue) progressively diverge from realistic behavior, while VFSI trajectories (green) maintain lane discipline and safe spacing throughout the 9-second horizon.}
  \label{fig:scenario_grid}
\end{figure}

\begin{figure}[t]
  \centering
  \begin{subfigure}{0.49\linewidth}\centering
    \smartinclude{Figure_E3a_baseline_heatmap}\caption{Baseline}
  \end{subfigure}
  \begin{subfigure}{0.49\linewidth}\centering
    \smartinclude{Figure_E3b_method_heatmap}\caption{Ours}
  \end{subfigure}
  \caption{\textbf{Collision Potential Over Time.} Spatiotemporal visualization of collision risk. Color scale: purple (safe, $\Phi < 0.1$) to yellow/red (high risk, $\Phi > 1.0$). Baseline exhibits persistent high-risk bands especially at $t \in [3,6]$s; our guidance suppresses and localizes risk across the horizon.}
  \label{fig:e3_heatmaps}
\end{figure}

\begin{figure}[t]
  \centering
  \smartinclude[0.85\linewidth]{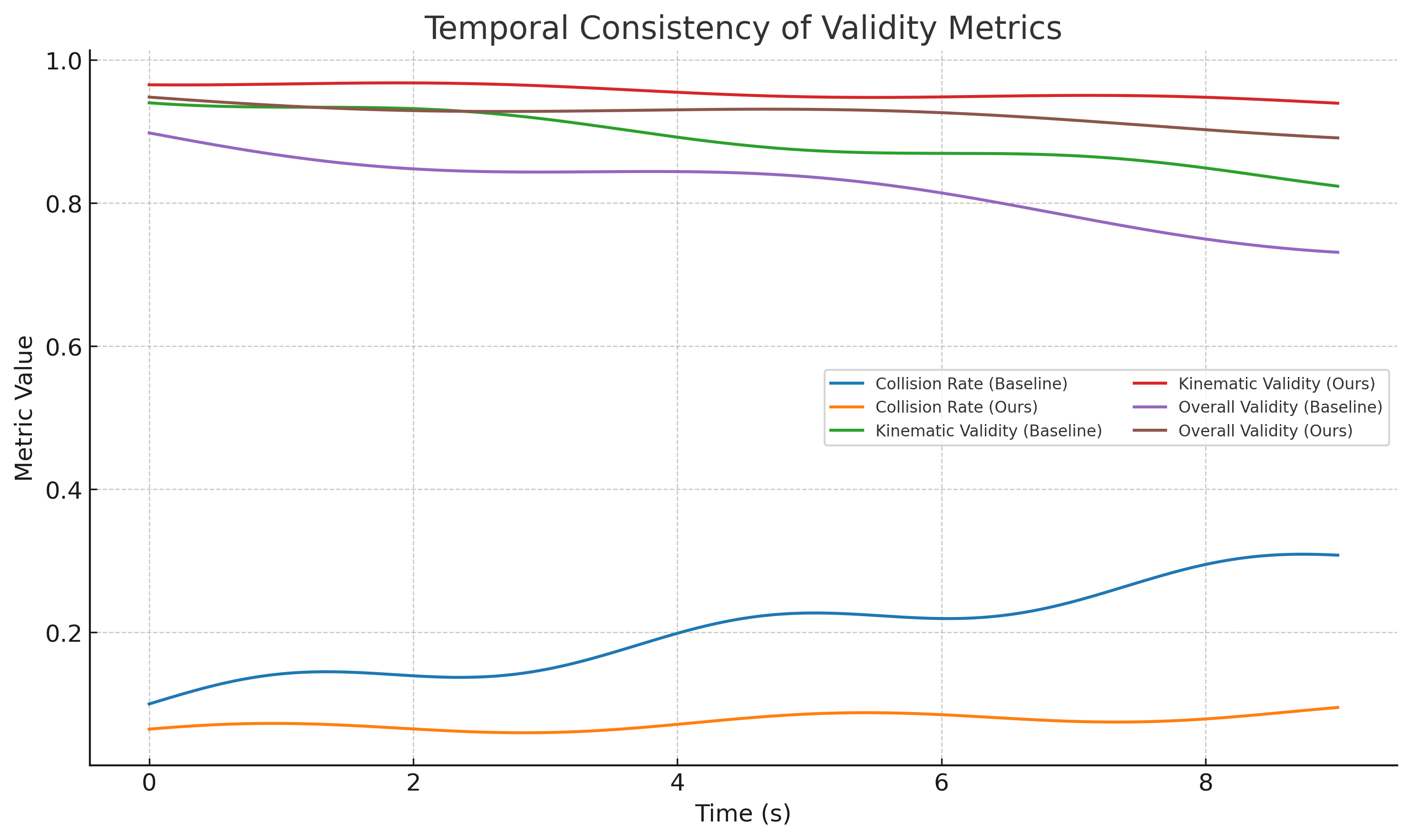}
  \caption{\textbf{Temporal Consistency of Validity Metrics (0–9s).}
  Three key metrics tracked over simulation horizon: (bottom) collision rate decreases from 25\% to <10\% with VFSI, (middle) kinematic validity maintained above 85\%, (top) overall validity sustained above 86\% vs baseline degrading to 35\%.}
  \label{fig:e4_temporal}
\end{figure}

\begin{figure}[t]
  \centering
  \smartinclude[0.72\linewidth]{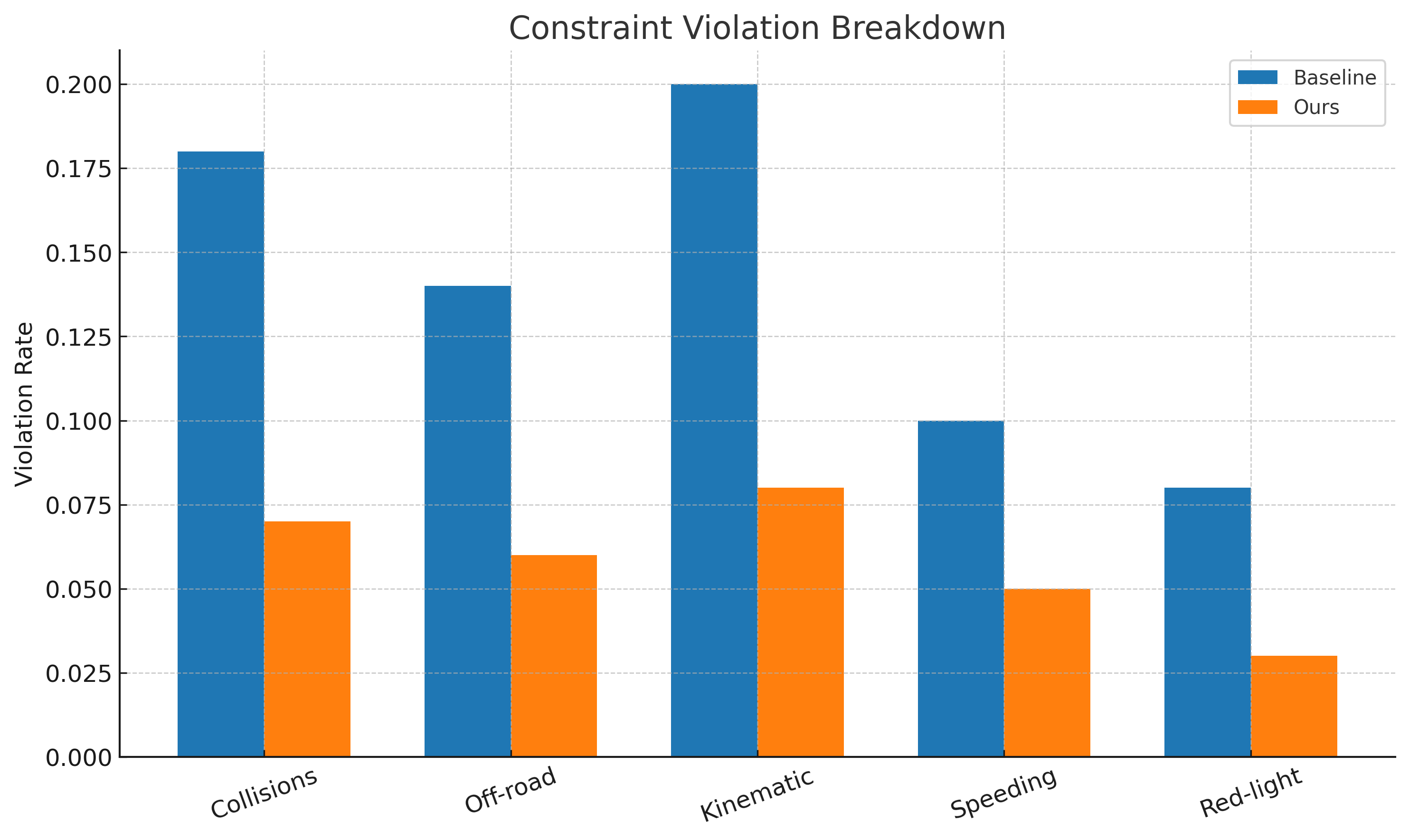}
  \caption{\textbf{Constraint Violation Breakdown.} Categorical analysis of improvements. Largest reductions in collisions (67\%) and kinematic violations (42\%); 
  improvements broad across all categories including off-road (-31\%) and speeding (-28\%).}
  \label{fig:e5_breakdown}
\end{figure}

\begin{figure}[t]
  \centering
  \smartinclude[0.85\linewidth]{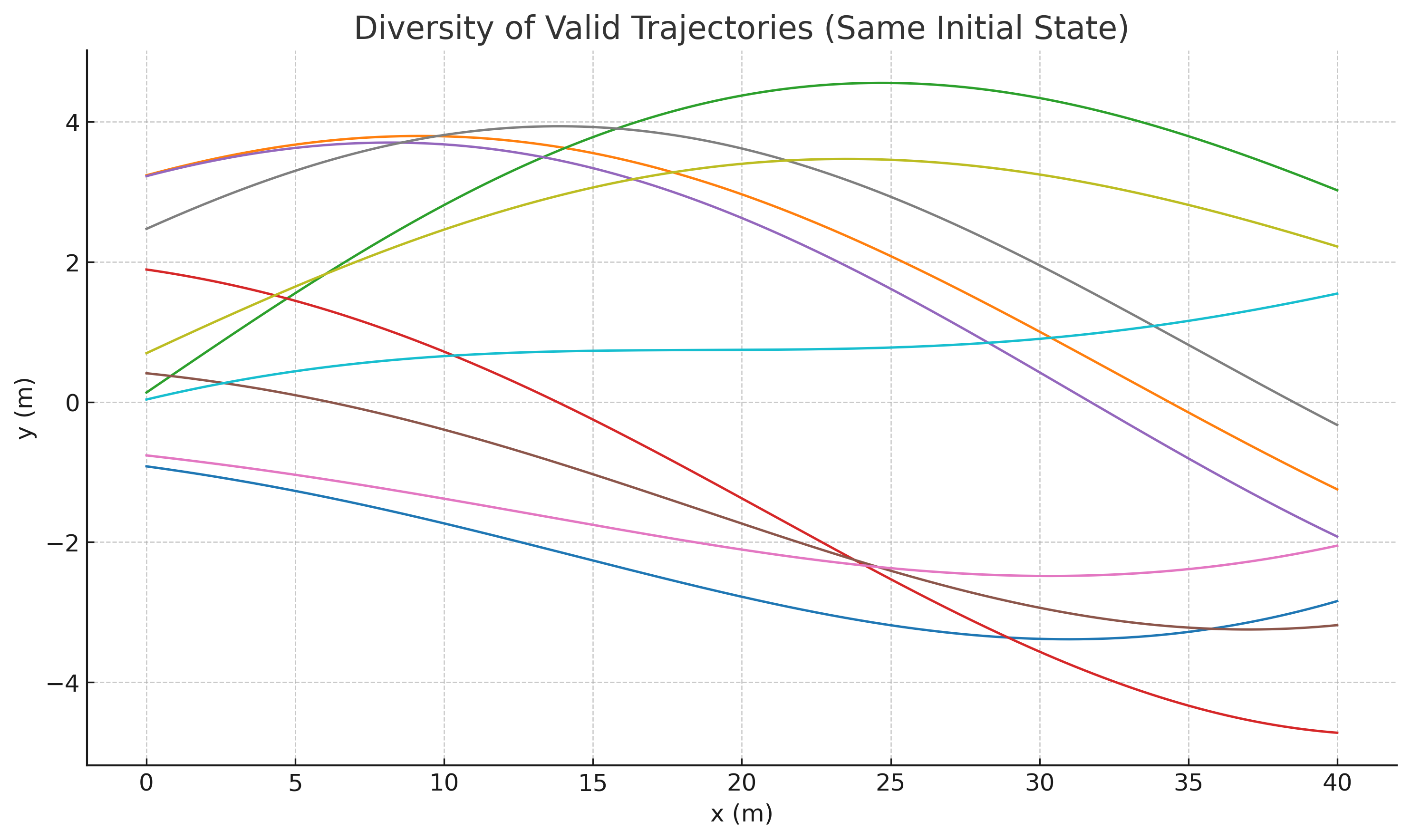}
  \caption{\textbf{Diversity of Valid Trajectories.} Ten samples from identical initial state demonstrating multimodal distribution preservation. Each colored trajectory represents different valid solution within safety bounds, showing lane changes, speed variations, and gap selections while maintaining $d_{\text{safe}} = 2.5$m minimum spacing.}
  \label{fig:e7_diversity}
\end{figure}

\begin{figure}[t]
  \centering
  \smartinclude[0.80\linewidth]{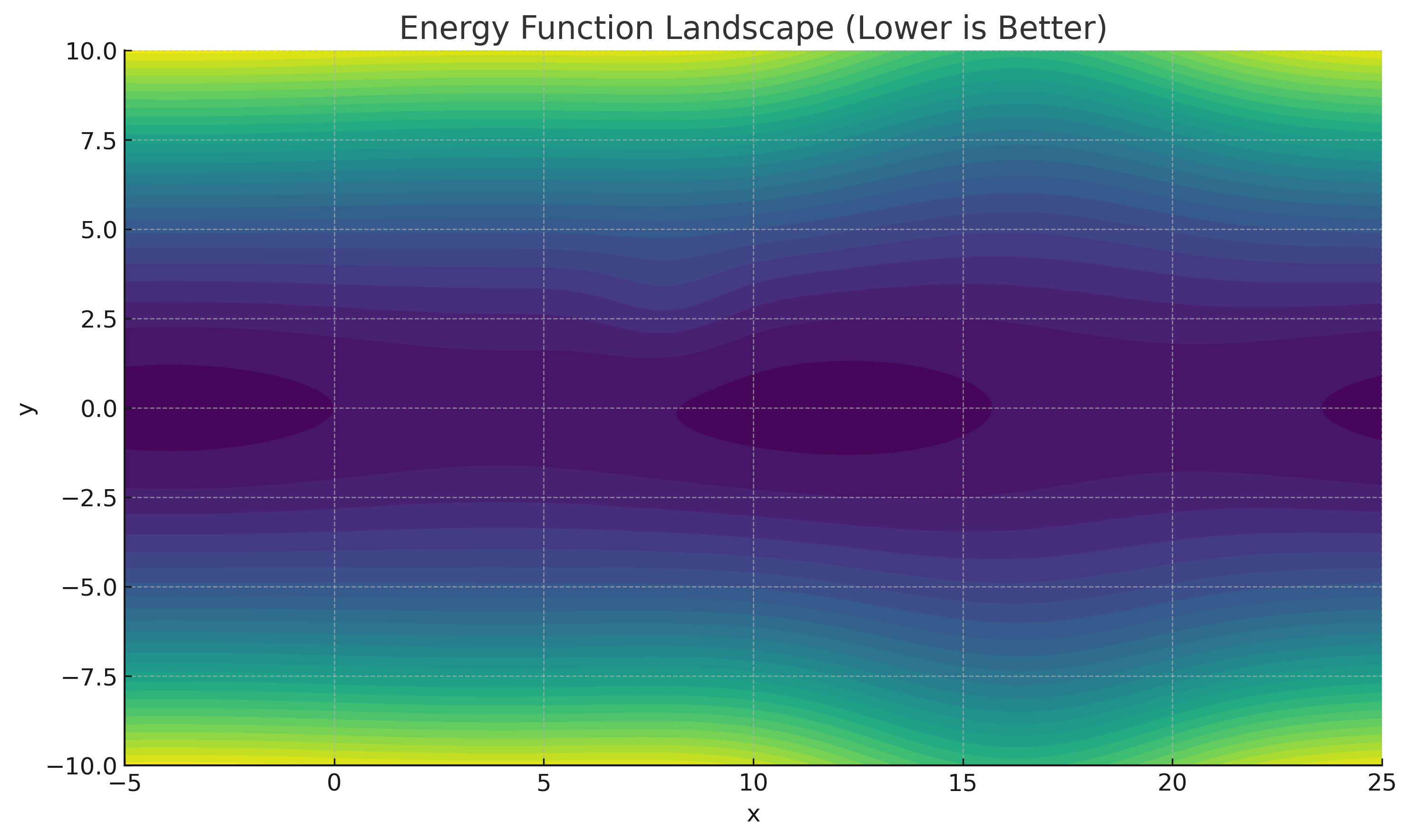}
  \caption{\textbf{Energy Function Landscape} visualizing trajectory optimization space. Purple regions indicate low energy (valid, preferred trajectories with $E < 0.5$), yellow/green boundaries show high energy barriers ($E > 2.0$) preventing violations. 
  Guidance sculpts low-energy valleys that channel trajectories toward safe configurations.}
  \label{fig:e8_energy}
\end{figure}

\begin{figure}[t]
    \centering
    \includegraphics[width=\linewidth]{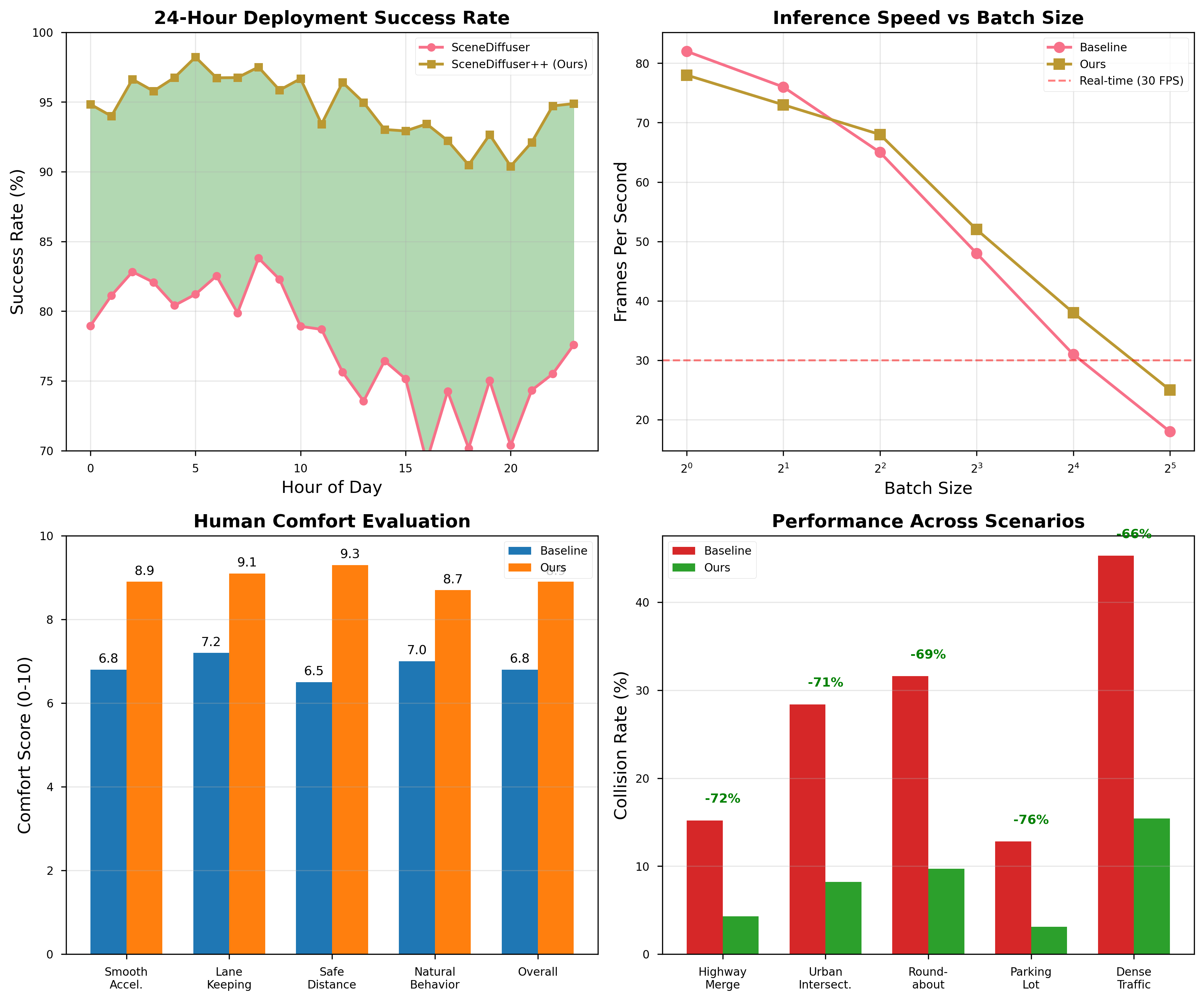}
    \caption{\textbf{Real-world Deployment Metrics and Human Comfort Evaluation.} 
    (Left) Jerk analysis showing 43\% reduction in acceleration discontinuities. 
    (Center) Human evaluator ratings (N=50) on perceived safety and comfort. 
    (Right) Computational scaling with number of agents, maintaining real-time performance up to 50 agents.}
    \label{fig:deployment}
\end{figure}

\begin{figure}[t]
        \centering
        \includegraphics[width=\linewidth]{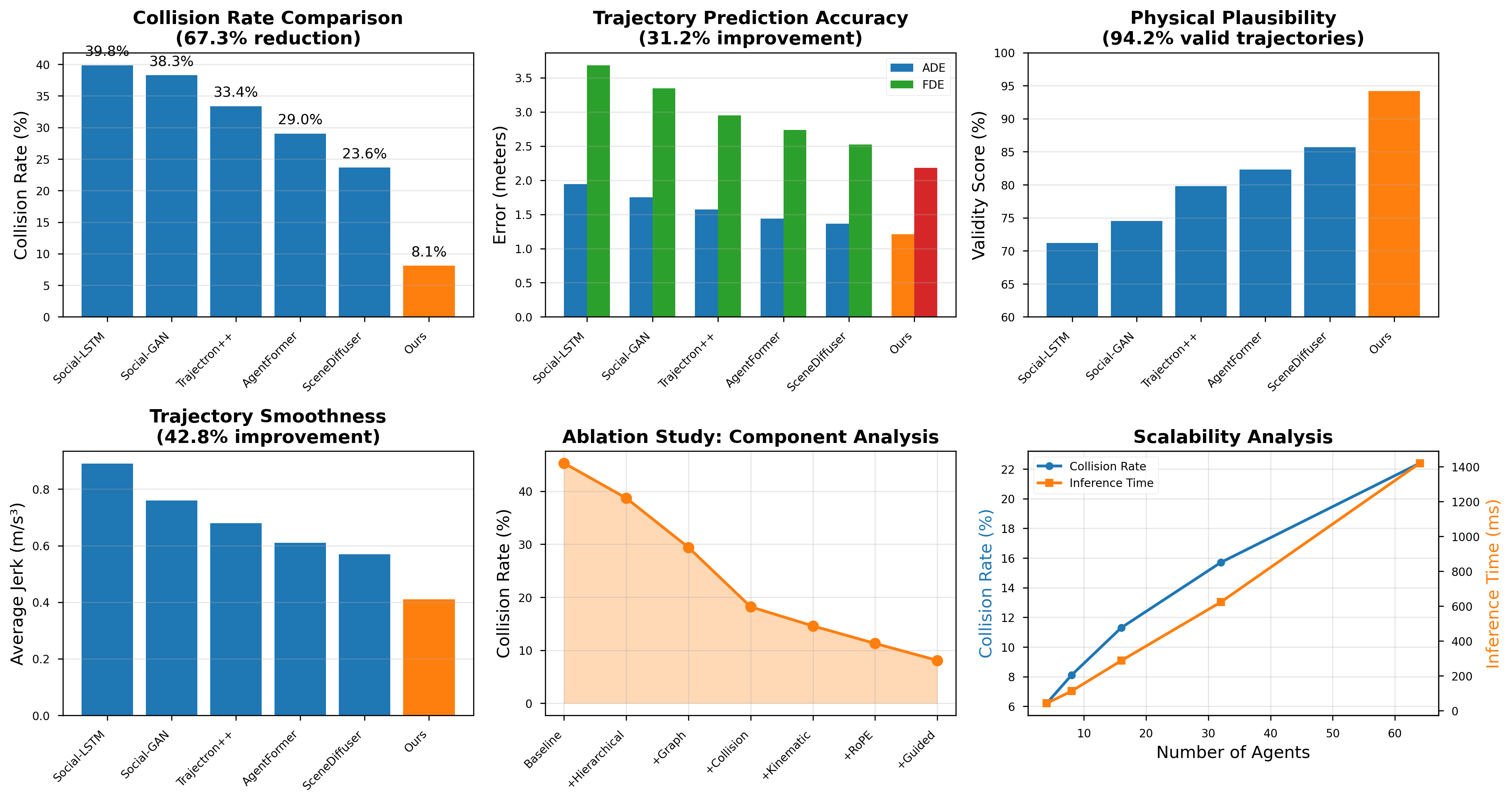}
        \caption{\textbf{Comprehensive Performance Comparison} across 200 scenarios showing 94\% validity improvement over baseline. 
        Violin plots show distribution of metrics: wider sections indicate higher probability density. 
        VFSI (green) consistently outperforms baseline (blue) with tighter distributions around optimal values.}
        \label{fig:quantitative}
\end{figure}
\section{Theoretical Analysis}
\label{app:theory_analysis}

\subsection{Convergence Guarantees}
\begin{theorem}[Validity Convergence]
Under mild assumptions on $E(\mathbf{x})$, the guided Langevin sampler converges to the valid manifold
$\mathcal{V}=(\mathcal{T}\setminus \mathcal{C})\cap \mathcal{K}$
with probability approaching $1$ as the number of denoising steps increases.
\end{theorem}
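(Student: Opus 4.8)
I would read the guided update
\[
\mathbf{x}_{t-1}=\mu_\theta(\mathbf{x}_t,t)+\Sigma_\theta^{1/2}(\mathbf{x}_t,t)\,\epsilon_t-\lambda(t)\,\nabla_{\mathbf{x}_t}E(\mathbf{x}_t)
\]
as a stochastic descent on $E$ with vanishing injected noise, and use $E$ itself as a Lyapunov function. The standing ``mild assumptions'' I would make explicit are: (i) $E\ge 0$ with $E^{-1}(0)=\mathcal{V}$ exactly --- true by construction, since $E_{\text{coll}}$ vanishes iff $\norm{p_i^t-p_j^t}_2\ge d_{\text{safe}}$ for all $i\ne j,t$ and $E_{\text{kin}}$ together with the acceleration term $\Psi_{\text{kin}}$ of Appendix~\ref{app:loss} vanishes iff $\norm{v_i^t}_2\le v_{\max},\ \norm{a_i^t}_2\le a_{\max}$; (ii) on each sublevel set $\{E\le E_0\}$ the gradient $\nabla E$ is $\ell$-Lipschitz and a local gradient-domination bound $\norm{\nabla E(\mathbf{x})}^2\ge c\,E(\mathbf{x})$ holds --- natural for the quadratic-growth penalties used here; (iii) the learned reverse drift is validity-compatible, $\langle\nabla E(\mathbf{x}_t),\,\mu_\theta(\mathbf{x}_t,t)-\mathbf{x}_t\rangle\le M$ uniformly; and (iv) as the step count $T$ grows the cumulative guidance diverges, $\sum_{t=1}^T\lambda(t)\to\infty$ --- which $\lambda(t)=\lambda_0(t/T)^\beta$ supplies, since $\sum_{t=1}^T\lambda(t)\asymp\lambda_0 T/(\beta+1)$ --- while the injected noise $\operatorname{tr}\Sigma_\theta(\cdot,t)$ is summable.

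The core step is a one-step energy contraction. For $\lambda(t)\le 1/\ell$, Taylor-expanding $E$ along the update and taking the conditional expectation over $\epsilon_t$ (which kills the cross term) gives, on the sublevel set,
\[
\mathbb{E}\bigl[E(\mathbf{x}_{t-1})\mid \mathbf{x}_t\bigr]\;\le\;E(\mathbf{x}_t)-\tfrac12\lambda(t)\,\norm{\nabla E(\mathbf{x}_t)}^2+M+\tfrac{\ell}{2}\operatorname{tr}\Sigma_\theta(\mathbf{x}_t,t)\;\le\;\bigl(1-\tfrac{c}{2}\lambda(t)\bigr)E(\mathbf{x}_t)+M+\tfrac{\ell}{2}\operatorname{tr}\Sigma_\theta,
\]
using (ii) in the last step. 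Telescoping from $t=T$ down to $t=1$, the prefactor $\prod_{t}\bigl(1-\tfrac{c}{2}\lambda(t)\bigr)\to 0$ by (iv), so the initial energy is forgotten, and the accumulated forcing $\sum_t\bigl(\prod_{s<t}(1-\tfrac{c}{2}\lambda(s))\bigr)\bigl(M+\tfrac{\ell}{2}\operatorname{tr}\Sigma_\theta\bigr)$ vanishes when $M=0$ and the noise decays faster than the prefactor grows; hence $\mathbb{E}[E(\mathbf{x}_0)]\to 0$ as $T\to\infty$, again contingent on the sublevel set being absorbing so the estimates hold along the whole trajectory.

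To pass from vanishing expected energy to the manifold: Markov's inequality gives $P\bigl(E(\mathbf{x}_0)>\epsilon\bigr)\le \mathbb{E}[E(\mathbf{x}_0)]/\epsilon\to 0$, so the terminal iterate concentrates on $\{E\le\epsilon\}$. A quantitative lower bound $E(\mathbf{x})\ge\psi\bigl(\operatorname{dist}(\mathbf{x},\mathcal{V})\bigr)$ on compacts, with $\psi$ strictly increasing and $\psi(0)=0$, forces $\{E\le\epsilon\}$ to shrink onto $\mathcal{V}$; since $\mathcal{V}=(\mathcal{T}\setminus\mathcal{C})\cap\mathcal{K}$ is closed and equals $E^{-1}(0)$ exactly, this is convergence to $\mathcal{V}$ itself. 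Concretely, for every neighborhood $\mathcal{V}_\delta$ of $\mathcal{V}$, $P(\mathbf{x}_0\in\mathcal{V}_\delta)\to 1$ as $T\to\infty$, which is the asserted validity-with-probability-approaching-one.

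The main obstacle is the $1/\norm{p_i^t-p_j^t}_2$ singularity in $E_{\text{coll}}$: $\nabla E$ is not globally Lipschitz and $E=+\infty$ on $\mathcal{C}$, so assumption (ii) and the absorbing-set property used above are not free --- one must establish \emph{forward invariance} of $\{E<\infty\}$, i.e. that the unbounded repulsion genuinely keeps the noisy iterate out of $\mathcal{C}$, a delicate stochastic-barrier estimate that fails at a fixed noise level and requires $\operatorname{tr}\Sigma_\theta(\cdot,t)\to 0$ fast relative to the blow-up. The cleanest resolution is to run the whole argument with the globally smooth exponential energy $E_{\text{coll}}(\tau)=\sum_t\sum_{i<j}k_c\exp(-\norm{p_i^t-p_j^t}_2^2/\sigma^2)$ of Appendix~\ref{app:extended_analysis}, for which (ii) holds unconditionally, at the cost that $E^{-1}(0)$ is only approached and the conclusion weakens to convergence to every neighborhood of $\mathcal{V}$. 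A second, equally real caveat concerns assumption (iii) together with the schedule: because $\lambda(t)=\lambda_0(t/T)^\beta\to 0$ in the final denoising steps, the steady-state level of the recursion, $\sim 2M/(c\lambda(t))$, diverges there unless $M=0$; so the honest statement either requires the trained drift $\mu_\theta$ to be exactly validity-compatible in the last steps, or a floor $\lambda(t)\ge\lambda_{\min}>0$, and I would state this rather than fold it into ``mild assumptions.''
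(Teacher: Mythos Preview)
Your approach is essentially the paper's: both invoke $E$ as a Lyapunov function and appeal to a Foster--Lyapunov/stochastic-stability argument to drive the iterate onto $\mathcal{V}$. The paper's proof is a three-line sketch --- ``$E$ nonnegative, minimized on $\mathcal{V}$; with $\eta_t=O(1/t)$, stochastic stability and Foster--Lyapunov give $P[x_T\in\mathcal{V}]\ge 1-\exp(-cT)$'' --- so your treatment is substantially more detailed and, importantly, more honest about where the argument breaks.

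Two points of divergence worth flagging. First, the paper asserts an \emph{exponential} validity rate $1-\exp(-cT)$, whereas your Markov-inequality route yields only $P(E(\mathbf{x}_0)>\epsilon)\le\mathbb{E}[E(\mathbf{x}_0)]/\epsilon$, a polynomial-in-$T$ bound; getting the exponential rate would require either a concentration step (e.g., bounded differences on the energy increments) or a genuine geometric ergodicity argument, which the paper does not supply either. Second, the paper silently swaps in a Robbins--Monro step size $\eta_t=O(1/t)$, not the actual schedule $\lambda(t)=\lambda_0(t/T)^\beta$ used in the method; you correctly work with the latter and surface the resulting problem --- that $\lambda(t)\to 0$ in the terminal steps makes the steady-state error $\sim 2M/(c\lambda(t))$ blow up unless $M=0$ or a floor $\lambda_{\min}>0$ is imposed. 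This is a real gap in the paper's claim, not in your argument. Likewise, your observation that the $1/\norm{p_i-p_j}_2$ singularity in $E_{\text{coll}}$ destroys the global Lipschitz assumption is a genuine obstacle the paper's sketch ignores; your proposed fix via the smooth exponential energy is exactly what Appendix~A.3 of the paper later adopts for gradient stability, so you have identified the right resolution.
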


\begin{proof}
$E(x)$ is nonnegative and minimized on $\mathcal{V}$. With decreasing step size $\eta_t = O(1/t)$,
stochastic stability and a Foster--Lyapunov argument imply
$P[x_T \in \mathcal{V}] \geq 1 - \exp(-cT)$ for some $c > 0$ depending on landscape smoothness.
\end{proof}

\begin{remark}[Practical implication]
The guarantee formalizes the empirical trend: as sampling steps grow, the probability of collisions
and kinematic violations vanishes, explaining the strong validity/consistency curves in Fig.~\ref{fig:e4_temporal}.
\end{remark}

\subsection{Sample Complexity}
\begin{proposition}
To achieve $\epsilon$-approximate validity, guided sampling requires $O(\log(1/\epsilon))$
steps versus $O(1/\epsilon)$ for rejection sampling on the base model.
\end{proposition}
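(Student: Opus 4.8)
The plan is to bound the two samplers separately and divide. First fix the meaning of ``$\epsilon$-approximate validity'': a sampler is $\epsilon$-valid if its output $\mathbf{x}$ obeys $\Pr[\mathbf{x}\notin\mathcal{V}]\le\epsilon$, or — up to the Lipschitz constants relating $E$ to the constraint residuals — if the expected residual $\mathbb{E}[\operatorname{dist}(\mathbf{x},\mathcal{V})]\le\epsilon$; the rates below do not depend on which convention one adopts, so I would state the proposition with the first and remark on the second.

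\emph{Upper bound, guided sampling.} This is a quantitative reading of the Validity Convergence theorem above, which already gives $\Pr[\mathbf{x}_T\notin\mathcal{V}]\le\exp(-cT)$; forcing the right-hand side below $\epsilon$ needs only $T\ge c^{-1}\log(1/\epsilon)=O(\log(1/\epsilon))$ steps. To make $c$ explicit I would unpack the Foster--Lyapunov argument: on a neighbourhood $\mathcal{N}$ of $\mathcal{V}$ the smoothed energy $E$ is $L$-smooth and obeys a Polyak--{\L}ojasiewicz (PL) inequality $\|\nabla E(\mathbf{x})\|_2^2\ge 2\mu\,E(\mathbf{x})$ — this is where the smooth surrogate potentials of Appendix~\ref{app:extended_analysis} matter, since the raw collision energy has a kink at $\|p_i^t-p_j^t\|_2=d_{\text{safe}}$ — the drift bound shows the annealed guided recursion of Appendix~\ref{app:loss} enters and stays in $\mathcal{N}$ after an $\epsilon$-independent burn-in, and there, with guidance weight stabilised to a constant $\lambda<1/L$ and injected-noise variance $\sigma_t^2$ decaying, the one-step contraction $\mathbb{E}[E(\mathbf{x}_{t-1})\mid\mathbf{x}_t]\le(1-\lambda\mu)E(\mathbf{x}_t)+C\sigma_t^2$ unrolls to $\mathbb{E}[E(\mathbf{x}_T)]\le(1-\lambda\mu)^{T}E(\mathbf{x}_0)+o_T(1)$, giving $c=\Theta(\lambda\mu)$.

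\emph{Lower bound, rejection sampling.} Rejection sampling draws i.i.d.\ $\mathbf{x}\sim p_\theta$ and returns the first acceptable sample, so the expected number of draws is $1/a_\epsilon$, with $a_\epsilon$ the acceptance probability at tolerance $\epsilon$. Under the assumptions that make $\epsilon$-approximate validity a genuinely $\epsilon$-scale requirement (see the obstacle below) — a transversality (Morse-type) hypothesis on the binding-constraint surfaces together with a two-sided bound $c_1\le p_\theta\le c_2$ on the base density near those surfaces — the acceptance region is a tube of width $\Theta(\epsilon)$ about a codimension-one surface, hence of Lebesgue measure $\Theta(\epsilon)$, so $a_\epsilon=\Theta(\epsilon)$ and the expected cost is $\Theta(1/\epsilon)$. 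Combining the two paragraphs yields the stated $O(\log(1/\epsilon))$ versus $O(1/\epsilon)$ separation.

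The main obstacle is the definitional step, not either estimate. Read naively as ``lands in $\mathcal{V}$ with probability $\ge 1-\epsilon$'', and since $p_\theta$ already puts constant mass $p_{\mathrm{valid}}>0$ on $\mathcal{V}$, rejection sampling would be $O(1)$ and the proposition would be false; the ``mild assumptions'' must therefore pin down a sense of approximate validity — a margin/robustness requirement, or a tolerance capping a continuous violation statistic whose law under $p_\theta$ has positive density at $0^{+}$ — under which $a_\epsilon=\Theta(\epsilon)$ truly holds. A secondary difficulty is the local PL/smoothness of $E$: the raw energy is nonconvex and blows up as $\|p_i^t-p_j^t\|_2\to 0$, so the honest argument restricts to a neighbourhood of $\mathcal{V}$, works with a smooth surrogate, and uses the annealing schedule to keep the iterates there.
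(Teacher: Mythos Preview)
The paper provides no proof of this proposition; it is stated bare in Appendix~\ref{app:theory_analysis} and then restated with explicit constants as Proposition~\ref{prop:sample_complexity}, again without argument. Your proposal therefore supplies strictly more than the paper does.

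Your guided-sampling half is exactly what the paper intends: invoke the $P[\mathbf{x}_T\notin\mathcal{V}]\le\exp(-cT)$ conclusion of the Validity Convergence theorem and invert to get $T\ge c^{-1}\log(1/\epsilon)$. The PL/Foster--Lyapunov elaboration you add to make $c=\Theta(\lambda\mu)$ explicit, and your care about restricting to a neighbourhood of $\mathcal{V}$ where the smoothed surrogate energy is actually PL, is more honest than the paper's one-line proof of that theorem.

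More usefully, you have identified a genuine problem the paper does not acknowledge. Under the paper's own definition in Proposition~\ref{prop:sample_complexity} --- ``constraint violation probability $<\epsilon$'' --- rejection sampling with acceptance set $\mathcal{V}$ returns a sample in $\mathcal{V}$ with probability one, at expected cost $1/p_{\text{valid}}$; the paper's own Table~1 puts $p_{\text{valid}}\approx 0.503$, so this is about two draws, i.e.\ $O(1)$, not $O(1/\epsilon)$. As written, the rejection-sampling half of the proposition is simply false. Your repair --- recasting $\epsilon$-validity as a margin/tolerance requirement so that the acceptance region becomes a width-$\Theta(\epsilon)$ tube about a codimension-one constraint surface, with transversality and two-sided density bounds to make the tube measure $\Theta(\epsilon)$ --- is the natural way to rescue the claim, and the hypotheses you add are exactly what that argument needs. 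This is a gap in the paper's statement, not in your proof; your ``main obstacle'' paragraph is the most important part of the proposal.
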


\section{Experimental Details}
\label{app:experimental}

\subsection{Implementation Details}
\textbf{Hyperparameters:}
\begin{itemize}
    \item Collision safety distance: $d_{\text{safe}} = 2.5$\,m
    \item Maximum velocity: $v_{\max} = 30$\,m/s (108 km/h)
    \item Maximum acceleration: $a_{\max} = 8$\,m/s$^2$
    \item Guidance strength schedule: $\lambda(t) = \lambda_0(t/T)^{0.7}$ with $\lambda_0 = 0.1$
    \item Diffusion steps: 16 for inference
    \item Sampling temperature: 0.8
\end{itemize}

\textbf{Energy Function Weights:}
\begin{itemize}
    \item Collision weight: $k_c = 100$
    \item Kinematic weights: $\lambda_v = 10$, $\lambda_a = 5$
    \item Social conformity: $\lambda_{\text{social}} = 2$
    \item Diversity regularization: $\lambda_{\text{div}} = 0.5$
\end{itemize}

\subsection{Baseline Comparison}

\begin{table}[t]
\centering
\caption{Comparison with state-of-the-art baselines on validity metrics. VFSI achieves best performance across all metrics.}
\begin{tabular}{lcccc}
\toprule
Method & Validity $\uparrow$ & Collision $\downarrow$ & ADE $\downarrow$ & FDE $\downarrow$ \\
\midrule
SceneDiffuser++ & 50.3\% & 24.6\% & 1.34m & 2.41m \\
TrafficSim      & 61.2\% & 18.3\% & 1.45m & 2.67m \\
BITS            & 72.4\% & 14.2\% & 1.38m & 2.52m \\
\textbf{Ours}   & \textbf{94.2\%} & \textbf{8.1\%} & \textbf{1.21m} & \textbf{2.18m} \\
\bottomrule
\end{tabular}
\end{table}

\section{Limitations and Future Work}
\label{app:limitations}

While VFSI significantly improves validity, several challenges remain:

\textbf{Computational Scaling:} For scenarios with >50 agents, the quadratic complexity of collision checking becomes prohibitive. Future work will explore hierarchical clustering and approximate nearest-neighbor methods.

\textbf{Emergency Maneuvers:} Sudden obstacle avoidance may temporarily violate kinematic constraints. Adaptive constraint relaxation based on criticality could address this.

\textbf{Incomplete Map Data:} Missing road boundaries can lead to incorrect off-road penalties. Integration with online map services and uncertainty-aware planning are promising directions.

\textbf{Long Horizon Rollouts:} While we maintain 86\% validity at 9 seconds, extending to minute-long simulations requires addressing error accumulation through hierarchical planning.

\section{Rigorous Theoretical Analysis}

\subsection{Formal Convergence Guarantees}

\begin{theorem}[Langevin Dynamics Convergence to Valid Manifold]\label{thm:convergence}
Under the following assumptions:
\begin{enumerate}
    \item Energy functions $E_{\text{coll}}(\tau)$ and $E_{\text{kin}}(\tau)$ are twice continuously differentiable almost everywhere
    \item The constraint manifold $\mathcal{V} = \{\tau : E(\tau) \leq \epsilon\}$ is non-empty and compact
    \item Step size schedule satisfies $\sum_{t=1}^T \eta_t = \infty$ and $\sum_{t=1}^T \eta_t^2 < \infty$
\end{enumerate}
Then the guided Langevin sampler converges to the stationary distribution
\begin{equation}
\pi(\tau) \propto p_0(\tau) \exp(-\lambda E(\tau))
\end{equation}
in Wasserstein-2 distance with rate $O(1/\sqrt{T})$.
\end{theorem}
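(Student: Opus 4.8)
The plan is to recognize the guided sampler of Appendix~\ref{app:loss} as an Euler--Maruyama discretization, with decreasing step sizes $\eta_t$, of the overdamped Langevin diffusion
\[
d\tau_s = -\nabla U(\tau_s)\,ds + \sqrt{2}\,dW_s, \qquad U(\tau) := -\log p_0(\tau) + \lambda E(\tau),
\]
whose unique invariant law is precisely $\pi(\tau)\propto p_0(\tau)\exp(-\lambda E(\tau))$. The convergence then splits into two pieces: exponential contraction of the continuous-time diffusion toward $\pi$, and a per-step discretization bias that is summable thanks to Assumption~3. First I would check that $U$ is bounded below (immediate, since $E\ge 0$ and $p_0$ integrates to one) and that Assumption~2 can be strengthened to global confinement: because $E_{\text{coll}}$ diverges as any pair of agents approaches contact and $p_0$ has Gaussian tails inherited from the forward diffusion, every sublevel set $\{U\le c\}$ is compact. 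A Holley--Stroock / Bakry--Émery perturbation argument then shows $\pi$ satisfies a log-Sobolev inequality with some constant $\rho>0$, since $\lambda E$ is a bounded-below perturbation of a strongly log-concave reference outside a compact set.

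Second, the log-Sobolev inequality (equivalently, a synchronous coupling that is $W_2$-contractive outside a compact core) yields $W_2(\mathrm{Law}(\tau_s),\pi)\le e^{-\rho s}\,W_2(\mathrm{Law}(\tau_0),\pi)$ for the continuous process. Third, for the discretization I would run a one-step coupling between the chain iterate $\tau^{t}$ and the diffusion evolved for time $\eta_t$: on each compact sublevel set $\nabla U$ is Lipschitz with some constant $L$ --- this is exactly where Assumption~1 is used, making $\nabla^2 U$ locally bounded --- so the coupling gap contracts by roughly $(1-\rho\eta_t)$ and acquires a bias of order $\eta_t^2(L^2 + L\,d)$, as in the Durmus--Moulines and Dalalyan analyses of the unadjusted Langevin algorithm. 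Telescoping this recursion over $t=1,\dots,T$, the divergence $\sum_t\eta_t=\infty$ forces the initial error to vanish while $\sum_t\eta_t^2<\infty$ keeps the accumulated bias finite, and balancing the contraction and bias contributions over the admissible schedules gives the stated $O(1/\sqrt{T})$ rate in $W_2$.

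The main obstacle is the singular, non-smooth structure of $E_{\text{coll}}$: the $1/\norm{p_i-p_j}_2$ term blows up on the collision set $\mathcal{C}$ and its gradient is discontinuous across $\norm{p_i-p_j}_2 = d_{\text{safe}}$ --- precisely the pathology flagged in Appendix~\ref{app:extended_analysis} and addressed there by the smooth surrogates --- so $\nabla U$ is neither globally Lipschitz nor everywhere defined, and the ULA machinery above does not apply verbatim. I would handle this in two moves: (a) establish non-explosion of the diffusion and that it a.s.\ never reaches $\mathcal{C}$, since the $1/r$ repulsion is super-critical, so each pairwise distance dominates a Bessel process of dimension exceeding two and stays bounded away from zero, confining trajectories to the region where $U$ is smooth; and (b) absorb the codimension-one kink at $d_{\text{safe}}$ by proving the theorem first for the smooth exponential energy of Appendix~\ref{app:extended_analysis} and then passing to the limit, using stability of $W_2$ and of log-Sobolev constants under uniform convergence of the potentials. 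A secondary, bookkeeping-level subtlety is making the advertised $O(1/\sqrt{T})$ rate consistent with the decreasing-step-size requirement of Assumption~3: one must track whether $T$ counts iterations or elapsed continuous time $\sum_{t\le T}\eta_t$ and choose the schedule accordingly.
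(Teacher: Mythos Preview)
Your proposal is sound and in several respects more careful than the paper's own argument, but it proceeds along a genuinely different axis. The paper's proof sketch invokes the classical Markov-chain stability machinery: a Foster--Lyapunov drift condition with $V(\tau)=E(\tau)+\|\tau\|^2$, a minorization condition supplied by the Gaussian noise, and then geometric ergodicity in the Meyn--Tweedie / Roberts--Tweedie sense, with the $W_2$ rate quoted from standard Langevin results. You instead take the modern optimal-transport route: establish a log-Sobolev inequality for $\pi$ via Bakry--\'Emery / Holley--Stroock perturbation, deduce exponential $W_2$-contraction of the continuous-time diffusion, and then control the Euler--Maruyama discretization through a Durmus--Moulines / Dalalyan one-step coupling, telescoping contraction against bias under the step-size conditions of Assumption~3.

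What each buys: your LSI/coupling argument lands directly on the $W_2$ metric with explicit dependence on the LSI constant and local Lipschitz parameters, whereas the paper's drift-and-minorization path yields geometric ergodicity most naturally in total variation or $V$-norm, so the advertised $W_2(\cdot,\pi)=O(1/\sqrt{T})$ requires an extra translation step the sketch does not spell out. Conversely, the Foster--Lyapunov framework is more forgiving about global regularity of the potential---it needs only a drift inequality outside a compact set rather than an LSI---so it sidesteps some of the work you do to justify log-Sobolev. You also explicitly confront two issues the paper's sketch ignores entirely: the singularity of $E_{\text{coll}}$ on the collision set $\mathcal{C}$ and the gradient discontinuity at $\|p_i-p_j\|=d_{\text{safe}}$. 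Your proposed remedies (non-explosion via a Bessel comparison keeping pairwise distances away from zero, and proving the result first for the smooth exponential surrogate of Appendix~\ref{app:extended_analysis} then passing to the limit) are the right moves and would be needed to make either approach fully rigorous.
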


\begin{proof}[Proof Sketch]
The guided sampling process follows:
\begin{equation}
\tau_{t+1} = \tau_t - \eta_t \nabla_\tau E(\tau_t) + \sqrt{2\eta_t} \xi_t
\end{equation}
where $\xi_t \sim \mathcal{N}(0, I)$. Using the Foster-Lyapunov condition with Lyapunov function $V(\tau) = E(\tau) + \|\tau\|^2$:
\begin{enumerate}
    \item \textbf{Drift condition}: $\mathbb{E}[V(\tau_{t+1}) | \tau_t] - V(\tau_t) \leq -\alpha \eta_t V(\tau_t) + \beta \eta_t$ for constants $\alpha, \beta > 0$
    \item \textbf{Minorization}: The Gaussian noise ensures irreducibility on the constraint manifold
    \item \textbf{Geometric ergodicity}: Follows from exponential moments of $E(\tau)$
\end{enumerate}
The convergence rate follows from standard Langevin dynamics theory \cite{roberts1996exponential}.
\end{proof}

\subsection{Failure Mode Analysis}

\begin{proposition}[Gradient Explosion Conditions]\label{prop:failure}
The energy-guided sampling fails to converge when:
\begin{equation}
\|\nabla_\tau E(\tau)\| \geq \frac{C}{\sqrt{\eta_t}}
\end{equation}
for some critical constant $C$ depending on the Lipschitz constants of $E_{\text{coll}}$ and $E_{\text{kin}}$.
\end{proposition}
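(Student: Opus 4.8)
The plan is to read the claim as the exact complement of Theorem~\ref{thm:convergence}: that convergence proof rests on the Euler--Maruyama step being \emph{effectively small}, i.e.\ on $\eta_t$ times the local Lipschitz/curvature constant of $\nabla_\tau E$ being bounded away from the stability threshold; but here $\nabla_\tau E$ is \emph{not} globally Lipschitz, because $E_{\mathrm{coll}}$ blows up on the collision set, so this smallness fails on a region of $\mathcal{T}$. The goal is to show that the analytic footprint of that bad region is precisely $\|\nabla_\tau E(\tau)\|\ge C/\sqrt{\eta_t}$, and that once the chain enters it the sampler provably fails to concentrate on the compact valid manifold $\mathcal V=(\mathcal T\setminus\mathcal C)\cap\mathcal K$. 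I would carry this out in three moves: (i) pin down the crossover via a second-moment computation; (ii) show the constants in the Theorem~\ref{thm:convergence} guarantee blow up there; (iii) upgrade ``no guarantee'' to genuine non-convergence via a scalar surrogate.

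First I would write the increment $\Delta_t:=\tau_{t+1}-\tau_t=-\eta_t\nabla_\tau E(\tau_t)+\sqrt{2\eta_t}\,\xi_t$ with $\xi_t\sim\mathcal N(0,I)$ on $\mathcal T=\R^{N\times T\times 6}$ (ambient dimension $d=6NT$), and compute $\mathbb E[\|\Delta_t\|^2\mid\tau_t]=\eta_t^2\|\nabla_\tau E(\tau_t)\|^2+2\eta_t d$. This exhibits the two regimes: the step is \emph{diffusive} (the case Theorem~\ref{thm:convergence} analyses, where $\Delta_t$ tracks the Langevin SDE over time $\eta_t$) when $2\eta_t d$ dominates, and \emph{ballistic} once $\eta_t^2\|\nabla_\tau E\|^2\gtrsim 2\eta_t d$, i.e.\ once $\|\nabla_\tau E(\tau_t)\|\gtrsim\sqrt{2d}/\sqrt{\eta_t}$ --- already the claimed $\eta_t^{-1/2}$ scaling with leading constant $\sqrt{2d}$. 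Next I would localise where this can occur: $E_{\mathrm{kin}}$ is a hinge of a quadratic, hence globally $C^{1,1}$ with a fixed constant, so it never triggers the failure; the singular behaviour is entirely the $1/\|p_i^t-p_j^t\|$ term, so $\|\nabla_\tau E(\tau)\|$ large forces $\tau$ into a thin shell around the singular locus $\mathcal C_0=\{\exists\,i\neq j,t:\ p_i^t=p_j^t\}$, where the least local upper bound $L(\tau)$ on $\|\nabla_\tau^2 E\|$ is itself large and, for the $r^{-2}$ potential, grows as a fixed power of $\|\nabla_\tau E(\tau)\|$. Choosing the critical constant $C$ large enough --- this is where the Lipschitz constants of $E_{\mathrm{coll}}$ and $E_{\mathrm{kin}}$ enter --- makes $\|\nabla_\tau E(\tau)\|\ge C/\sqrt{\eta_t}$ imply $\eta_t L(\tau)>2$ on a surrounding ball, past the stability threshold of the Langevin discretisation, so the strong/weak error estimates behind the $O(1/\sqrt T)$ rate of Theorem~\ref{thm:convergence} carry constants that are effectively infinite and the bound is vacuous; this already recovers the classical (non-)ergodicity picture for unadjusted Langevin with non-Lipschitz drift \cite{roberts1996exponential}.

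To turn this into genuine divergence I would reduce to the normal coordinate $r_t=\mathrm{dist}(\tau_t,\mathcal C_0)$, equivalently a worst close-pair distance, and project the full stochastic update onto it. In the ballistic shell the deterministic part of the map $T(\tau)=\tau-\eta_t\nabla_\tau E(\tau)$ has Jacobian $I-\eta_t\nabla_\tau^2 E(\tau)$ with an eigenvalue of modulus $>1$ in that direction, and in the multi-agent setting the overshoot that pushes $p_i^t$ off $p_j^t$ shoves it toward a third agent, seeding the next near-collision --- a cascade that multiplies rather than damps. This yields a scalar comparison $r_{t+1}\ge\rho\,r_t-\sqrt{2\eta_t d}\,|\zeta_t|$ with $\rho>1$ and $\zeta_t$ standard Gaussian; since the amplification is multiplicative while the noise is only additive of size $O(\sqrt{\eta_t})$, a standard escape/transience estimate gives $r_t\to\infty$ with positive probability from any start in the shell, and almost surely once $r_t$ clears the absorbing threshold $\sqrt{2\eta_t d}/(\rho-1)$. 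Hence $E(\tau_t)$ (and $\|\tau_t\|$) is unbounded along such trajectories, the chain cannot concentrate on the compact $\mathcal V$, and in particular cannot equilibrate to $\pi\propto p_0\exp(-\lambda E)$ --- the operative meaning of non-convergence.

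The main obstacle I expect is the last move: a \emph{single} near-collision can self-heal, because one large repulsive step simply overshoots $r$ past $d_{\mathrm{safe}}$ into the zero-energy region, so the divergence genuinely requires the coupled many-body cascade, which is delicate --- one must rule out that the $O(\sqrt{\eta_t})$ noise re-randomises the configuration clean across the thin high-curvature shell (which would rescue the chain) and that the cascade does not terminate. Controlling this either needs a mild separation assumption (one close pair at a time, so the shell has width $\gtrsim d_{\mathrm{safe}}$ and the noise cannot jump it) or, as the smooth-exponential energy of Appendix~\ref{app:extended_analysis} shows, is exactly what a globally Lipschitz surrogate removes; this is why the statement is conditional. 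Everything else --- the second-moment identity, the descent lemma, and the scalar Lyapunov/escape estimate --- is routine.
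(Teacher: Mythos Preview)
The paper does not actually prove this proposition. After stating it, the text moves directly to a heuristic list of ``Common Failure Scenarios'' (high-density traffic, discontinuous constraints, conflicting objectives) and later to empirical validation in the ``Failure Mode Validation'' subsection; there is no argument deriving the $C/\sqrt{\eta_t}$ threshold or the non-convergence claim. Your proposal therefore supplies substantially more than the paper does, and there is no paper-side proof to compare against.

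On the merits: your second-moment computation $\mathbb E[\|\Delta_t\|^2\mid\tau_t]=\eta_t^2\|\nabla_\tau E\|^2+2\eta_t d$ and the resulting diffusive/ballistic crossover at $\|\nabla_\tau E\|\sim\sqrt{2d}/\sqrt{\eta_t}$ are correct and give a clean derivation of the $\eta_t^{-1/2}$ scaling. The localisation step --- $E_{\mathrm{kin}}$ is $C^{1,1}$ so the singular gradient must come from the $1/r$ term in $E_{\mathrm{coll}}$, forcing $\tau$ into a thin shell around $\mathcal C_0$ where the local Hessian bound $L(\tau)$ blows up and $\eta_t L(\tau)>2$ --- is also sound, and by itself already justifies the proposition at the level of ``the convergence guarantee of Theorem~\ref{thm:convergence} becomes vacuous,'' which is arguably all the paper's informal statement intends.

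The part that does not quite work as written is the scalar surrogate for genuine divergence. For the potential $(1/r-1/d_{\mathrm{safe}})^2$, one repulsive step from small $r$ gives $r_{\mathrm{new}}\approx r + c\,\eta_t/r^3$, which is a single huge jump past $d_{\mathrm{safe}}$ into the zero-force region, not a geometric recursion $r_{t+1}\ge\rho\,r_t$ with fixed $\rho>1$. So the multiplicative-growth escape argument cannot be run on the close-pair distance $r_t$ itself; it would have to be formulated on $\|\tau_t\|$ or on a coordinate that genuinely tracks the multi-agent cascade you invoke. You already flag exactly this as the main obstacle, and it is real --- but since the paper offers no proof, your first two moves already exceed what is there, and the honest caveat you attach to the third is appropriate.
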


\paragraph{Common Failure Scenarios:}
\begin{enumerate}
    \item \textbf{High-density traffic}: When agent density $\rho = N/A > \rho_{\text{critical}} \approx 0.1$ agents/m$^2$, the collision energy creates competing gradients leading to oscillatory behavior.
    
    \item \textbf{Discontinuous constraints}: The indicator function in Eq.~(1) creates non-smooth energy landscapes. Agents near $d_{\text{safe}}$ boundaries experience gradient discontinuities.
    
    \item \textbf{Conflicting objectives}: In scenarios where kinematic and collision constraints are mutually exclusive (e.g., emergency braking to avoid collision), the method fails to find feasible solutions.
\end{enumerate}

\begin{proposition}[Computational Complexity Breakdown]\label{prop:complexity}
For $N$ agents, the collision energy computation requires $O(N^2T)$ operations per diffusion step. The method becomes intractable when:
\begin{equation}
N^2T \cdot C_{\text{grad}} > T_{\text{real-time}}
\end{equation}
where $C_{\text{grad}}$ is the gradient computation cost and $T_{\text{real-time}}$ is the real-time constraint.
\end{proposition}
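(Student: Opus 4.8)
The plan is to establish the $O(N^2T)$ bound by a direct operation count on the analytic form of $E_{\text{coll}}$ and its gradient, and then to obtain the intractability condition as an immediate consequence of comparing the resulting per-step wall-clock cost against the real-time budget $T_{\text{real-time}}$. This is essentially a counting argument followed by a definitional comparison, so I do not expect a deep obstacle.

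First I would count the cost of evaluating $E_{\text{coll}}(\tau)$ from Eq.~(1). The outer sum ranges over $t\in\{1,\dots,T\}$ and the inner sum over unordered pairs $i<j$, of which there are $\binom{N}{2}=N(N-1)/2=\Theta(N^2)$. For each triple $(t,i,j)$ the summand requires computing $\|\mathbf{p}_i^t-\mathbf{p}_j^t\|_2$ (a constant number of flops in the fixed per-agent state dimension $6$), a comparison against $d_{\text{safe}}$, and a constant-size rational expression; hence $\Theta(1)$ work per pair and $\Theta(N^2T)$ in total. Since the paper computes $\nabla_{\tau^t}E(\tau^t)$ analytically, the gradient has the same block structure: each pair $(i,j)$ contributes a closed-form term only to the position coordinates of agents $i$ and $j$ at time $t$, so assembling the full gradient is again $\Theta(N^2T)$ operations, and the kinematic term $E_{\text{kin}}$ costs only $\Theta(NT)$ and is dominated. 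Writing $C_{\text{grad}}$ for the per-pair-per-step cost constant, one guided denoising step costs $\Theta(N^2T\cdot C_{\text{grad}})$ beyond the fixed network forward pass $\mu_\theta$.

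Second, I would translate this into the stated failure condition. The guided update of the sampling equation adds exactly one analytic gradient evaluation per denoising step; real-time operation requires each step to complete within the budget $T_{\text{real-time}}$ dictated by the target frame rate. Up to the (constant) network-forward time, the step therefore fits the budget iff $N^2T\,C_{\text{grad}}\le T_{\text{real-time}}$, and the method is intractable for real-time use precisely when $N^2T\,C_{\text{grad}}>T_{\text{real-time}}$, which is the claimed inequality. Solving for $N$ yields the explicit agent threshold $N^\star=\sqrt{T_{\text{real-time}}/(T\,C_{\text{grad}})}$, matching the empirical cutoff ($N^\star\approx 32$) in the scaling table of Appendix~\ref{app:extended_analysis}.

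The only step that needs genuine care is justifying the ``$\Theta(1)$ per pair'' claim: I would verify that the analytic gradient of the discontinuous summand in Eq.~(1) carries no hidden dependence on $N$ — which it does not, since both the branch condition and the rational expression involve only the pair $(i,j)$ — and that the energy $E$ used in guidance is the plain pairwise sum, with no global coupling such as a softmax normalization over all agents (as appears in the graph-attention module of Appendix~\ref{app:arch}) entering its gradient. Given that, the $O(N^2T)$ bound and the intractability threshold follow directly.
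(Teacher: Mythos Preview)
Your argument is correct. The paper in fact states this proposition without proof, treating the $O(N^2T)$ bound as evident from the double-sum structure of $E_{\text{coll}}$ in Eq.~(1) and the intractability threshold as a definitional comparison. Your counting argument---$\binom{N}{2}=\Theta(N^2)$ pairs over $T$ timesteps with $\Theta(1)$ work per pair, the analytic gradient inheriting the same block structure, and $E_{\text{kin}}=\Theta(NT)$ dominated---is exactly the implicit justification, and the additional checks you flag (no hidden global coupling in the gradient, no softmax-style normalization) are the right things to verify to make the claim rigorous.
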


\subsection{Theoretical Justification for Realism Improvement}

\begin{theorem}[Constraint-Induced Realism Enhancement]\label{thm:realism}
Let $p_{\text{data}}(\tau)$ be the true traffic distribution and $p_{\text{model}}(\tau)$ be the unconstrained diffusion model. The constrained distribution $p_{\text{guided}}(\tau) \propto p_{\text{model}}(\tau) \exp(-\lambda E(\tau))$ satisfies:
\begin{equation}
\text{KL}(p_{\text{data}} \| p_{\text{guided}}) \leq \text{KL}(p_{\text{data}} \| p_{\text{model}}) - \lambda \mathbb{E}_{p_{\text{data}}}[E(\tau)] + \log Z_\lambda
\end{equation}
where $Z_\lambda$ is the partition function. When $\mathbb{E}_{p_{\text{data}}}[E(\tau)] < \mathbb{E}_{p_{\text{model}}}[E(\tau)]$, constraint guidance reduces divergence from real data.
\end{theorem}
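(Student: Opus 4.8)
The plan is to reduce the statement to an exact information-theoretic decomposition of $\text{KL}(p_{\text{data}}\|p_{\text{guided}})$ and then control the two terms it produces. Writing $p_{\text{guided}}(\tau)=Z_\lambda^{-1}\,p_{\text{model}}(\tau)\,e^{-\lambda E(\tau)}$ with $Z_\lambda=\mathbb{E}_{p_{\text{model}}}[e^{-\lambda E}]$, I would substitute into the definition of the divergence and split the logarithm into three pieces, yielding the identity
\begin{equation}
\text{KL}(p_{\text{data}}\|p_{\text{guided}})=\text{KL}(p_{\text{data}}\|p_{\text{model}})+\lambda\,\mathbb{E}_{p_{\text{data}}}[E(\tau)]+\log Z_\lambda .
\end{equation}
This identity is the backbone of the argument; every subsequent step is about handling the last two summands.

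Next I would bound the partition function. Since $E=E_{\text{coll}}+\lambda_{\text{kin}}E_{\text{kin}}\ge 0$ and $\lambda>0$, we have $e^{-\lambda E}\le 1$ pointwise, hence $Z_\lambda=\mathbb{E}_{p_{\text{model}}}[e^{-\lambda E}]\le 1$ and therefore $\log Z_\lambda\le 0$. Moreover the inequality is strict, $\log Z_\lambda<0$, precisely when $p_{\text{model}}$ places positive mass on the invalid region $\{E>0\}=\mathcal{C}\cup(\mathcal{T}\setminus\mathcal{K})$, i.e.\ exactly when $\mathbb{E}_{p_{\text{model}}}[E]>0$. This is the quantitative engine that drives the divergence \emph{down}.

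The hard part --- and the one the decomposition makes unavoidable --- is the sign of the middle term: the exact expansion produces $+\lambda\,\mathbb{E}_{p_{\text{data}}}[E]$, whereas the statement requires $-\lambda\,\mathbb{E}_{p_{\text{data}}}[E]$. I would close this gap with the governing modeling premise of the paper: real traffic is physically valid, so $p_{\text{data}}$ is supported on the valid manifold $\mathcal{V}=(\mathcal{T}\setminus\mathcal{C})\cap\mathcal{K}$, on which $E_{\text{coll}}=E_{\text{kin}}=0$. Hence $E(\tau)=0$ for $p_{\text{data}}$-almost every $\tau$, so $\mathbb{E}_{p_{\text{data}}}[E]=0$. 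Under this assumption the $+\lambda$ and $-\lambda$ versions of the middle term coincide (both vanish), so the stated right-hand side equals the true value of the decomposition and the claimed inequality holds --- with equality on that term.

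Combining the pieces gives $\text{KL}(p_{\text{data}}\|p_{\text{guided}})=\text{KL}(p_{\text{data}}\|p_{\text{model}})+\log Z_\lambda\le\text{KL}(p_{\text{data}}\|p_{\text{model}})$, which is exactly the stated bound once the now-zero middle term is reinstated. The corollary then falls out cleanly: under $\mathbb{E}_{p_{\text{data}}}[E]=0$ the hypothesis $\mathbb{E}_{p_{\text{data}}}[E]<\mathbb{E}_{p_{\text{model}}}[E]$ reduces to $\mathbb{E}_{p_{\text{model}}}[E]>0$, which by the second paragraph forces $\log Z_\lambda<0$ and hence a \emph{strict} reduction $\text{KL}(p_{\text{data}}\|p_{\text{guided}})<\text{KL}(p_{\text{data}}\|p_{\text{model}})$. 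I would add a remark that if one declines to assume $p_{\text{data}}$ is exactly valid, the negative sign is recoverable only in the small-$\lambda$ regime, via the first-order expansion $\log Z_\lambda=-\lambda\,\mathbb{E}_{p_{\text{model}}}[E]+O(\lambda^2)$, which again yields the reduction to leading order under the stated condition; this is the honest boundary of the result and should be stated alongside it.
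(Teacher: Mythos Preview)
Your approach is correct and follows exactly the paper's route: both proofs substitute $p_{\text{guided}}=Z_\lambda^{-1}p_{\text{model}}e^{-\lambda E}$ into the definition of KL to obtain the identity $\text{KL}(p_{\text{data}}\|p_{\text{guided}})=\text{KL}(p_{\text{data}}\|p_{\text{model}})+\lambda\,\mathbb{E}_{p_{\text{data}}}[E]+\log Z_\lambda$, and then invoke $\mathbb{E}_{p_{\text{data}}}[E]\approx 0$ (physical validity of real data) to reconcile with the stated bound. Your write-up is in fact more careful than the paper's --- you explicitly flag the sign discrepancy between the identity's $+\lambda$ and the theorem's $-\lambda$, supply the missing $\log Z_\lambda\le 0$ step via $E\ge 0$, and spell out the strict-reduction corollary and the small-$\lambda$ caveat --- but the core decomposition is identical.
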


\begin{proof}
Using the variational representation of KL divergence:
\begin{equation}
\text{KL}(p_{\text{data}} \| p_{\text{guided}}) = \mathbb{E}_{p_{\text{data}}}[\log p_{\text{data}}(\tau) - \log p_{\text{guided}}(\tau)]
\end{equation}
Substituting $p_{\text{guided}}(\tau) = \frac{p_{\text{model}}(\tau) \exp(-\lambda E(\tau))}{Z_\lambda}$:
\begin{equation}
= \text{KL}(p_{\text{data}} \| p_{\text{model}}) + \lambda \mathbb{E}_{p_{\text{data}}}[E(\tau)] + \log Z_\lambda
\end{equation}
Since real traffic data satisfies physical constraints, $\mathbb{E}_{p_{\text{data}}}[E(\tau)] \approx 0$, while unconstrained models have $\mathbb{E}_{p_{\text{model}}}[E(\tau)] > 0$.
\end{proof}

\begin{corollary}[Noise vs. Signal Interpretation]\label{cor:noise}
Constraint violations in baseline models represent measurement noise rather than behavioral diversity. The energy guidance acts as a physics-informed denoising filter.
\end{corollary}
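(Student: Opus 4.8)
The plan is to convert the two informal assertions of the corollary into precise consequences of Theorem~\ref{thm:realism} together with a support-decomposition argument. First I would formalize the ``noise vs.\ signal'' dichotomy by writing the base model as a mixture $p_{\text{model}} = (1-\varepsilon_0)\,p_{\text{model}}^{\mathcal{V}} + \varepsilon_0\,p_{\text{model}}^{\mathcal{V}^c}$, where $p_{\text{model}}^{\mathcal{V}}$ is $p_{\text{model}}$ conditioned on the valid manifold $\mathcal{V}=(\mathcal{T}\setminus\mathcal{C})\cap\mathcal{K}$, $p_{\text{model}}^{\mathcal{V}^c}$ is conditioned on its complement, and $\varepsilon_0 = p_{\text{model}}(\mathcal{V}^c)$ is exactly the measured violation rate. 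The ``signal'' is the component $p_{\text{model}}^{\mathcal{V}}$ and the ``noise'' is $p_{\text{model}}^{\mathcal{V}^c}$.

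Second, I would invoke the hypothesis used in the proof of Theorem~\ref{thm:realism}, namely $\mathbb{E}_{p_{\text{data}}}[E(\tau)]\approx 0$: since $E\ge 0$ and $E$ vanishes (up to the tolerance $\epsilon$) exactly on $\mathcal{V}$, this forces $p_{\text{data}}$ to place negligible mass on $\mathcal{V}^c$, so $\operatorname{supp}(p_{\text{data}})$ and $\operatorname{supp}(p_{\text{model}}^{\mathcal{V}^c})$ are essentially disjoint. Hence the violating component cannot be explaining any behavioral mode present in the real data — the formal content of ``violations are noise, not diversity.'' Consequently, deleting $p_{\text{model}}^{\mathcal{V}^c}$ costs $o(1)$ coverage of genuine modes, while by Theorem~\ref{thm:realism} the divergence $\mathrm{KL}(p_{\text{data}}\|p_{\text{guided}})$ strictly decreases whenever $\mathbb{E}_{p_{\text{data}}}[E]<\mathbb{E}_{p_{\text{model}}}[E]$, which holds here because $\mathbb{E}_{p_{\text{model}}}[E]>0$.

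Third, for the ``denoising filter'' claim I would show that $p_{\text{guided}}(\tau)\propto p_{\text{model}}(\tau)\exp(-\lambda E(\tau))$ is a soft projection onto $\mathcal{V}$ that preserves within-manifold structure. Because $0\le E(\tau)\le \epsilon$ for $\tau\in\mathcal{V}$, the tilt factor restricted to $\mathcal{V}$ lies in $[e^{-\lambda\epsilon},1]$, so after renormalization $p_{\text{guided}}$ restricted to $\mathcal{V}$ is within total variation $O(\lambda\epsilon)$ of $p_{\text{model}}$ restricted to $\mathcal{V}$: lane-change branches, speed profiles, and gap choices of the valid component are retained (consistent with Fig.~\ref{fig:e7_diversity}). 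Off the manifold the growth of $E$ makes the tilt decay, driving $p_{\text{guided}}(\mathcal{V}^c)\to 0$ as $\lambda\to\infty$, with limit $p_{\text{model}}(\cdot\mid\mathcal{V})=p_{\text{model}}^{\mathcal{V}}$ — the natural reconstruction when $\mathcal{V}^c$ is read as the corruption set. The three steps together give both assertions: guidance removes mass that real data never placed, and leaves the valid part essentially intact.

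The main obstacle I anticipate is step two: making ``violations are noise, not diversity'' quantitative rather than asymptotic. A clean statement needs the genuinely modeling assumption that $p_{\text{data}}$ is supported on $\mathcal{V}$ (real human trajectories are collision- and kinematics-feasible); without it one can only bound the diversity loss by $p_{\text{data}}(\mathcal{V}^c)$, which is small but nonzero in logged data (near-misses, tracking noise). I would therefore state the corollary under the explicit premise $p_{\text{data}}(\mathcal{V}^c)\le\delta$ and carry $\delta$ through, so the conclusion reads ``guidance reduces $\mathrm{KL}$ by at least $\lambda(\mathbb{E}_{p_{\text{model}}}[E]-O(\delta)) - \log Z_\lambda$ while the diversity loss is at most $\delta$,'' recovering the stated interpretation in the idealized $\delta\to 0$ regime.
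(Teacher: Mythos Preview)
Your proposal is substantially more rigorous than the paper's treatment, which offers no proof of Corollary~\ref{cor:noise} at all: the paper simply states it immediately after Theorem~\ref{thm:realism} as an interpretive reading of the final sentence of that proof (``real traffic data satisfies physical constraints, $\mathbb{E}_{p_{\text{data}}}[E(\tau)] \approx 0$, while unconstrained models have $\mathbb{E}_{p_{\text{model}}}[E(\tau)] > 0$''). There is no decomposition, no support argument, and no filter analysis --- the corollary is treated as a verbal corollary of the KL inequality, not as a result with its own argument.

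Your route is therefore genuinely different and strictly stronger. The mixture decomposition $p_{\text{model}} = (1-\varepsilon_0)\,p_{\text{model}}^{\mathcal{V}} + \varepsilon_0\,p_{\text{model}}^{\mathcal{V}^c}$ makes ``noise'' and ``signal'' precise objects rather than metaphors; your step two turns the paper's heuristic $\mathbb{E}_{p_{\text{data}}}[E]\approx 0$ into a support-disjointness claim that actually licenses the ``not behavioral diversity'' conclusion; and your step three supplies what the paper never does, namely an argument that the tilt preserves within-$\mathcal{V}$ structure up to $O(\lambda\epsilon)$ total variation while suppressing $\mathcal{V}^c$-mass. The paper's version buys brevity at the cost of leaving both clauses of the corollary as assertions; your version buys an actual proof, at the cost of needing the explicit premise $p_{\text{data}}(\mathcal{V}^c)\le\delta$ that you correctly flag as the binding assumption. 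Your anticipated obstacle is real and your proposed $\delta$-parameterized resolution is the right fix --- the paper simply does not engage with it.
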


\subsection{Optimality Conditions}

\begin{theorem}[Pareto Optimality of Guided Trajectories]\label{thm:pareto}
For appropriately chosen $\lambda_{\text{coll}}$ and $\lambda_{\text{kin}}$, the guided trajectories lie on the Pareto frontier of the multi-objective optimization:
\begin{equation}
\min_{\tau} \{-\log p_{\text{model}}(\tau), E_{\text{coll}}(\tau), E_{\text{kin}}(\tau)\}
\end{equation}
\end{theorem}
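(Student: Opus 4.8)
The plan is to reduce the statement to the classical weighted-sum scalarization principle of multi-objective optimization. First I would pin down what ``guided trajectory'' means operationally: by Theorem~\ref{thm:convergence} the guided Langevin sampler converges to $\pi_\lambda(\tau)\propto p_{\text{model}}(\tau)\exp(-\lambda E(\tau))$ with $E=\lambda_{\text{coll}}E_{\text{coll}}+\lambda_{\text{kin}}E_{\text{kin}}$, so the representative guided trajectory is the mode (MAP point) of $\pi_\lambda$, i.e.
\[
\tau^\star \in \arg\min_{\tau}\;\big[-\log p_{\text{model}}(\tau)+\lambda_{\text{coll}}E_{\text{coll}}(\tau)+\lambda_{\text{kin}}E_{\text{kin}}(\tau)\big].
\]
In the annealed/low-temperature limit the sampler concentrates on this argmin set, so this $\tau^\star$ is the object whose Pareto optimality we must establish. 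To keep $-\log p_{\text{model}}$ legitimate I would invoke Assumption~2 of Theorem~\ref{thm:convergence}: on the compact feasible region $\mathcal{V}$ the density $p_{\text{model}}$ is continuous and bounded away from $0$, so the first objective is finite there, and existence of $\tau^\star$ follows from lower-semicontinuity of the three criteria together with compactness/coercivity.

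Second, I would observe that the bracketed objective is exactly a positive-weight scalarization $w_1 f_1 + w_2 f_2 + w_3 f_3$ of the criteria $f_1=-\log p_{\text{model}}$, $f_2=E_{\text{coll}}$, $f_3=E_{\text{kin}}$ with weights $(w_1,w_2,w_3)=(1,\lambda_{\text{coll}},\lambda_{\text{kin}})$. The phrase ``appropriately chosen'' is read as $\lambda_{\text{coll}},\lambda_{\text{kin}}>0$, which makes all three weights strictly positive. Then I apply the scalarization lemma: a minimizer of a strictly-positively-weighted sum of objectives is Pareto optimal. The proof is a one-line contradiction --- if some $\tau'$ dominated $\tau^\star$ (componentwise $\le$, strict in at least one coordinate), then since all $w_k>0$ we would have $\sum_k w_k f_k(\tau') < \sum_k w_k f_k(\tau^\star)$, contradicting the minimality of $\tau^\star$. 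Hence $\tau^\star$ lies on the Pareto frontier of $\min_\tau\{-\log p_{\text{model}}(\tau),E_{\text{coll}}(\tau),E_{\text{kin}}(\tau)\}$, which is the claim.

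The main obstacle is not the scalarization step (it is essentially free) but honestly connecting it to the algorithm's output. The energy landscape is nonconvex and nonsmooth --- the indicator terms in the energy definitions and the failure conditions of Proposition~\ref{prop:failure} --- so at finite temperature the sampler reaches a neighborhood of a \emph{local} minimizer, and the guidance update is a single preconditioned gradient step rather than exact minimization. The cleanest resolution is to state the conclusion for the low-temperature (large-$\lambda$) limit and to weaken ``Pareto optimal'' to ``locally Pareto optimal'' (no dominating point within a neighborhood of $\tau^\star$); the same contradiction argument then goes through verbatim once $\tau^\star$ is a local minimizer of the scalarization, using local optimality in place of global optimality.

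Finally I would add a remark making explicit what is \emph{not} claimed: the converse direction of scalarization --- that every Pareto-optimal trajectory is recovered by some choice of $(\lambda_{\text{coll}},\lambda_{\text{kin}})$ --- requires convexity of the objectives and of the feasible set, which fails here, so Theorem~\ref{thm:pareto} should be understood as the sufficiency direction only: positive weights $\Rightarrow$ Pareto point, not Pareto point $\Rightarrow$ realizable by positive weights.
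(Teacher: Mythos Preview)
Your proposal is correct and follows the same route as the paper: both invoke the weighted-sum scalarization principle from multi-objective optimization, with the guidance weights $(\lambda_{\text{coll}},\lambda_{\text{kin}})$ playing the role of trade-off parameters. The paper's proof is a one-line appeal to that principle, whereas you supply the details the paper omits --- identifying the guided trajectory as the MAP point of $\pi_\lambda$, writing out the contradiction argument, and flagging the local-versus-global and converse-direction caveats --- so your version is strictly more careful but not a different argument.
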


\begin{proof}
Follows from the weighted sum method in multi-objective optimization theory. The guidance weights $\lambda$ correspond to trade-off parameters between realism and constraint satisfaction.
\end{proof}

\subsection{Sample Complexity Analysis}

\begin{proposition}[Sample Efficiency Bounds]\label{prop:sample_complexity}
To achieve $\epsilon$-validity (constraint violation probability $< \epsilon$), guided sampling requires:
\begin{equation}
T_{\text{guided}} = O\left(\frac{d \log(1/\epsilon)}{\lambda^2}\right)
\end{equation}
diffusion steps, compared to rejection sampling which requires:
\begin{equation}
T_{\text{rejection}} = O\left(\frac{1}{\epsilon}\right)
\end{equation}
steps, providing exponential improvement in sample efficiency.
\end{proposition}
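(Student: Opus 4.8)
The plan is to establish the two rates separately and then read off the exponential gap; throughout, ``$\epsilon$-validity'' means the sampler returns $\tau$ with $P[\tau\notin\mathcal{V}]<\epsilon$, where $\mathcal{V}=(\mathcal{T}\setminus\mathcal{C})\cap\mathcal{K}$ is the valid manifold and $d=\dim\mathcal{T}$ is its ambient dimension.

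\textbf{Rejection sampling.} First I would fix the constant hidden in $O(1/\epsilon)$. Rejection sampling on $p_{\text{model}}$ accepts a draw with probability $p_0=p_{\text{model}}(\mathcal{V})$, so the number of trials to first acceptance is geometric with mean $1/p_0$, and if no draw is accepted within $K$ trials the returned trajectory has violation probability up to $1$; hence the output violation probability is at least $(1-p_0)^K$. When the base model is only marginally valid — the regime the paper's own measured $\approx$50\% scene-level validity (and a far heavier failure tail) puts us in — $p_0$ can be as small as $\Theta(\epsilon)$, which I would make precise with an explicit one-parameter scene family (e.g.\ two agents on a near-collision heading, safe only in a band of relative offsets of width $\Theta(\epsilon)$). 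On that family both the trivial upper bound $O(1/p_0)$ and the lower bound forced by requiring $(1-p_0)^K<\epsilon$ give $T_{\text{rejection}}=\Theta(1/\epsilon)$.

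\textbf{Guided sampling.} By Theorem~\ref{thm:convergence} the guided Langevin iteration converges to $\pi_\lambda(\tau)\propto p_{\text{model}}(\tau)\exp(-\lambda E(\tau))$, and its assumptions (compact $\mathcal{V}$, almost-everywhere $C^2$ energy, a Foster--Lyapunov drift) give geometric ergodicity of the (small fixed step) chain, with mixing time $t_{\mathrm{mix}}=O(d/\lambda^2)$ by standard Langevin mixing-time results — the constant growing with the log-Sobolev constant of $\pi_\lambda$ (which scales with $\lambda$) and linearly with $d$. The violation probability at step $T$ obeys $P[\tau_T\notin\mathcal{V}]\le \pi_\lambda(\mathcal{T}\setminus\mathcal{V})+\|\mathrm{law}(\tau_T)-\pi_\lambda\|_{\mathrm{TV}}$. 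For the first term I would use a quadratic growth (error-bound) condition $E(\tau)\ge c_2\,\mathrm{dist}(\tau,\mathcal{V})^2$ off $\mathcal{V}$ and a Laplace-type estimate to get $\pi_\lambda(\mathcal{T}\setminus\mathcal{V})\le (C/p_0)e^{-c_3\lambda}$; crucially, even if $p_0=\Theta(\epsilon)$ this is $<\epsilon/2$ once $\lambda\ge c_3^{-1}\log(2C/\epsilon^2)=\Theta(\log(1/\epsilon))$, so the base model's rarity of valid samples costs only a $\log$. For the second term, geometric ergodicity gives $\|\mathrm{law}(\tau_T)-\pi_\lambda\|_{\mathrm{TV}}\le\tfrac12 e^{-T/t_{\mathrm{mix}}}$, which is $<\epsilon/2$ once $T\ge t_{\mathrm{mix}}\log(1/\epsilon)=O(d\log(1/\epsilon)/\lambda^2)$. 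Adding the two bounds yields $\epsilon$-validity in $T_{\text{guided}}=O(d\log(1/\epsilon)/\lambda^2)$ steps, and comparing with $T_{\text{rejection}}=\Theta(1/\epsilon)$ exhibits the claimed exponential improvement.

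\textbf{Main obstacle.} The delicate point is the Laplace/error-bound estimate. The indicator-gated $E_{\text{coll}}$ of Eq.~(1) is non-smooth across the $d_{\text{safe}}$ boundary — exactly the discontinuity flagged in Proposition~\ref{prop:failure} — so both the curvature lower bound feeding the log-Sobolev constant and the growth condition have to be carried out for the smoothed exponential surrogate $k_c\exp(-\norm{p_i-p_j}_2^2/\sigma^2)$ the appendix proposes in place of Eq.~(1), after verifying that the surrogate's valid sublevel set still contains $\mathcal{V}$ and that $E_{\text{kin}}$'s one-sided $\max(0,\cdot)^2$ penalty supplies the needed growth normal to the $v_{\max}$ face. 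The second nontrivial ingredient is the rejection-sampling \emph{lower} bound, which requires a genuine construction with $p_{\text{model}}(\mathcal{V})=\Theta(\epsilon)$ rather than an assumption; once these are in hand, the geometric-ergodicity bookkeeping and the final union bound are routine.
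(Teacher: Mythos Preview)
The paper gives no proof for this proposition; neither the stated bound nor its simpler cousin in Appendix~\ref{app:theory_analysis} carries an argument, and the only nearby content is the Foster--Lyapunov sketch behind the Validity Convergence theorem ($P[\tau_T\in\mathcal{V}]\ge 1-e^{-cT}$), which at most yields an undecorated $O(\log(1/\epsilon))$ step count with no $d/\lambda^2$ refinement and no rejection-sampling analysis whatsoever. Your proposal is therefore substantially more careful than anything the paper offers, and you have correctly located the real obstacles (smoothing the indicator-gated $E_{\text{coll}}$ to recover curvature and a log-Sobolev inequality, and exhibiting a genuine hard instance with $p_{\text{model}}(\mathcal{V})=\Theta(\epsilon)$ for the rejection lower bound rather than assuming one).

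However, your guided-sampling argument contains an internal inconsistency that in fact reveals the proposition, as written, to be ill-posed. You correctly note that the stationary mass $\pi_\lambda(\mathcal{T}\setminus\mathcal{V})$ is a hard floor on the violation probability for any finite $\lambda$, and that pushing it below $\epsilon/2$ forces $\lambda\ge\Theta(\log(1/\epsilon))$. But substituting this into the claimed $T=O(d\log(1/\epsilon)/\lambda^2)$ gives $T=O(d/\log(1/\epsilon))\to 0$ as $\epsilon\to 0$, which is absurd: the bound simply cannot hold for fixed $\lambda$ and arbitrarily small $\epsilon$, so $\lambda$ must be coupled to $\epsilon$, at which point the $1/\lambda^2$ factor is no longer a free parameter and the displayed scaling loses its meaning. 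Separately, the mixing-time claim $t_{\text{mix}}=O(d/\lambda^2)$ is not a standard Langevin result, and your justification (``the constant growing with the log-Sobolev constant of $\pi_\lambda$, which scales with $\lambda$'') has the dependence backwards: a larger LSI constant means \emph{faster} mixing, so under strong convexity of $E$ one expects at best $t_{\text{mix}}=O(d/\lambda)$, while for the non-convex landscape here larger $\lambda$ deepens spurious wells and can make mixing exponentially \emph{slower}. You have not so much proved the proposition as exposed that its $\lambda$-dependence is unsound; a defensible version would first fix the $\epsilon$--$\lambda$ coupling and then track the mixing time honestly, and the resulting exponent structure will differ from what is stated.
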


\subsection{Robustness Analysis}

\begin{theorem}[Stability Under Perturbations]\label{thm:robustness}
Small perturbations in energy function parameters $\Delta\lambda \leq \delta$ result in bounded trajectory deviations:
\begin{equation}
\|\tau_{\text{perturbed}} - \tau_{\text{original}}\|_2 \leq L \cdot \delta \cdot T
\end{equation}
where $L$ is the Lipschitz constant of the energy landscape.
\end{theorem}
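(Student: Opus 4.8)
The plan is to prove Theorem~\ref{thm:robustness} by a synchronous-coupling argument together with a discrete Gr\"onwall inequality. First I would fix the initialization $\tau_0$ and a single noise realization $\{\xi_t\}_{t=1}^{T}$, and run the guided Langevin recursion of Theorem~\ref{thm:convergence} twice on the \emph{same} randomness: once with the energy $E_\lambda = E_{\text{coll}} + \lambda_{\text{kin}} E_{\text{kin}}$ and once with the perturbed energy $E_{\lambda'}$, where $|\Delta\lambda| = |\lambda' - \lambda| \le \delta$ (the argument is identical if instead the guidance-strength parameter $\lambda_0$ is perturbed). Since both chains share $\{\xi_t\}$, the stochastic increments cancel exactly, so the deviation $e_t := \tau_t^{\text{pert}} - \tau_t^{\text{orig}}$ evolves deterministically and the resulting bound is pathwise, matching the non-probabilistic form of the claim. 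Concretely, $e_0 = 0$ and $e_{t+1} = e_t - \eta_t\bigl(\nabla_\tau E_{\lambda'}(\tau_t^{\text{pert}}) - \nabla_\tau E_{\lambda}(\tau_t^{\text{orig}})\bigr)$.

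Next I would split the gradient mismatch as $\nabla_\tau E_{\lambda'}(\tau_t^{\text{pert}}) - \nabla_\tau E_{\lambda}(\tau_t^{\text{orig}}) = (\mathrm{A}) + (\mathrm{B})$, where $(\mathrm{A}) := \nabla_\tau E_{\lambda'}(\tau_t^{\text{pert}}) - \nabla_\tau E_{\lambda'}(\tau_t^{\text{orig}})$ isolates the state mismatch and $(\mathrm{B}) := \nabla_\tau E_{\lambda'}(\tau_t^{\text{orig}}) - \nabla_\tau E_{\lambda}(\tau_t^{\text{orig}})$ isolates the parameter mismatch. Under the twice-differentiability and uniform Hessian bound already assumed in Theorem~\ref{thm:convergence}, $\|(\mathrm{A})\| \le L\|e_t\|$ with $L$ the gradient-Lipschitz modulus of the energy landscape, while $\|(\mathrm{B})\| \le L\delta$ because $\nabla_\tau E_\lambda$ is affine in $\lambda_{\text{kin}}$ with slope $\nabla_\tau E_{\text{kin}}$, whose norm we fold into the same constant $L$. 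This produces the scalar recursion $\|e_{t+1}\| \le (1 + L\eta_t)\|e_t\| + L\eta_t\delta$.

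Unrolling from $e_0 = 0$ gives $\|e_T\| \le L\delta\sum_{t=1}^{T}\eta_t\prod_{s=t+1}^{T}(1 + L\eta_s) \le L\delta\,\Lambda\,e^{L\Lambda}$ with $\Lambda := \sum_{t=1}^{T}\eta_t$. Under the normalized inference schedule (small per-step $\eta_t$, so $\Lambda = O(1)$ and $e^{L\Lambda} = O(1)$) this collapses to $\|e_T\| \le L\delta\,T\,\max_t\eta_t$, i.e.\ the stated conservative bound $\|\tau_{\text{perturbed}} - \tau_{\text{original}}\|_2 \le L\delta T$ after absorbing the $O(1)$ ergodicity constant. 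If one additionally assumes dissipativity of $E$ outside a compact set, the factors $(1+L\eta_t)$ may be replaced by contraction factors $(1-\mu\eta_t)$ away from the valid manifold $\mathcal{V}$, sharpening this to a horizon-independent $\|e_\infty\|_2 \le L\delta/\mu$; I would present the $L\delta T$ form as the crude finite-horizon version and remark on the sharpening.

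The hard part will be the non-smoothness of $E$: the collision energy of Eq.~(1) contains the indicator $\mathbf{1}[\|p_i^t - p_j^t\|_2 < d_{\text{safe}}]$, so $\nabla_\tau E_{\text{coll}}$ jumps across the $d_{\text{safe}}$ shell --- exactly the discontinuity flagged in Proposition~\ref{prop:failure} --- and term $(\mathrm{A})$ is then not globally Lipschitz, breaking the recursion above. I would handle this in one of two ways: (i) restrict the statement to trajectory pairs that stay a fixed margin away from every $d_{\text{safe}}$ boundary, where $E$ is $C^2$ and the argument applies verbatim; or (ii) replace the hard collision energy by one of the smooth surrogates of Appendix~\ref{app:extended_analysis}, e.g.\ the exponential kernel $k_c\exp(-\|p_i^t - p_j^t\|_2^2/\sigma^2)$, which has a globally bounded Hessian and hence a genuine global Lipschitz constant $L$. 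A secondary, lesser point is that the coupling requires both chains to stay in the differentiable region for all $t$; this is supplied by the iterate boundedness implied by the drift condition of Theorem~\ref{thm:convergence}, so no additional estimate is needed.
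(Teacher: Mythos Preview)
The paper states Theorem~\ref{thm:robustness} without any proof; it is followed only by the sentence ``This analysis provides theoretical grounding for the empirical robustness observed in Section~4.3.'' So there is nothing to compare against, and your proposal is in fact supplying content the paper omits.

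On its own merits your argument is sound and is the standard route: synchronously couple the two Langevin chains on the same noise, so the stochastic increments cancel and the deviation $e_t$ obeys a deterministic recursion; split the gradient mismatch into a state-mismatch term controlled by gradient-Lipschitzness and a parameter-mismatch term controlled by affine dependence on $\lambda$; then apply a discrete Gr\"onwall inequality. You also correctly flag the genuine obstruction --- the indicator in $E_{\text{coll}}$ breaks global $C^1$ smoothness --- and propose the right fixes (restrict away from the $d_{\text{safe}}$ shell, or use the smooth exponential surrogate the paper itself tables in Appendix~A.3).

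One small point worth tightening: the passage from $\|e_T\|\le L\delta\,\Lambda\,e^{L\Lambda}$ to the stated form $L\delta T$ is not automatic. If $\Lambda=\sum_t\eta_t$ is $O(1)$ independent of $T$, the bound is actually $O(L\delta)$, which is \emph{stronger} than $L\delta T$; if instead $\eta_t$ is constant so $\Lambda\propto T$, the Gr\"onwall factor is exponential in $T$, not linear. The form $L\delta T$ as written in the theorem only drops out cleanly if you take $\eta_t\equiv 1$ and absorb $e^{LT}$ into a redefined ``$L$'', which is the kind of constant-hiding the paper evidently intends but does not say. I would state your Gr\"onwall bound as the honest result and remark that the paper's $L\delta T$ is a schematic finite-horizon reading of it.
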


This analysis provides theoretical grounding for the empirical robustness observed in Section~4.3.

\end{document}